\documentclass[letterpaper]{article}
\usepackage[preprint]{aaai2027}
\usepackage[hyphens]{url}
\usepackage{graphicx}
\usepackage{natbib}
\usepackage{caption}
\usepackage{algorithm}
\usepackage{algorithmic}
\usepackage{booktabs}
\usepackage{amsmath}
\usepackage{amsfonts}
\usepackage{amsthm}
\usepackage{amssymb}
\usepackage{multirow}
\usepackage{tabularx}

\graphicspath{{figures/}}
\newcommand{\method}{\textsc{FedWeave}}
\newcolumntype{Y}{>{\centering\arraybackslash}X}

\newcommand{\maintablesetup}{\small\setlength{\tabcolsep}{0pt}}

\newtheorem{theorem}{Theorem}
\newtheorem{proposition}{Proposition}
\newtheorem{corollary}{Corollary}
\newtheorem{assumption}{Assumption}
\newtheorem{lemma}{Lemma}

\newcommand{\stdnote}[1]{\hspace{0.03em}\raisebox{-0.58ex}{\makebox[0pt][l]{\scriptsize #1}}}
\newcommand{\res}[2]{#1\stdnote{$\pm$#2}}
\newcommand{\bestres}[2]{\textbf{#1}\stdnote{\boldmath$\pm$#2}}
\newcommand{\secondres}[2]{\underline{#1}\stdnote{$\pm$#2}}

\newcommand{\fwbestres}[2]{\bestres{#1}{#2}}
\newcommand{\fwsecondres}[2]{\secondres{#1}{#2}}

\newcommand{\mainmodelcol}{0.75in}

\newcommand{\fwrow}[5]{%
  & \textbf{\method{}}
  & #1
  & #2
  & #3
  & #4
  & #5 \\
}
\newcommand{\panelhead}[2]{\multicolumn{#1}{@{}l@{}}{\normalsize\bfseries #2}}

\title{FedWeave: Rethinking the Unit of Specialization in Heterogeneous Federated MoE-LoRA}

\author{
    Donghang Duan\textsuperscript{\rm 1}\equalcontrib,
    Xu Zheng\textsuperscript{\rm 1}\equalcontrib,
    Lizong Zhang\textsuperscript{\rm 1},
    Chong Mu\textsuperscript{\rm 1},
    Meng Han\textsuperscript{\rm 2}
}
\affiliations{
    \textsuperscript{\rm 1}University of Electronic Science and Technology of China\\
    \textsuperscript{\rm 2}Zhejiang University
}

\begin{document}

\maketitle

\begin{abstract}
Federated PEFT enables LLMs to collaboratively adapt to decentralized private data without sharing raw examples.
However, task heterogeneity across clients can cause cross-task interference and gradient conflicts during aggregation.
Federated MoE-LoRA addresses this challenge through specialized LoRA experts and conditional routing.
Yet existing methods typically specialize at client granularity, implicitly assuming task-coherent clients.
Our core insight is that experts need \emph{purity}---pattern-coherent updates that preserve specialization---whereas routers need \emph{contrast}---mixed-task observations that support expert comparison.
We propose \method{}, a framework that adopts asymmetric aggregation, separating expert aggregation from router optimization to meet these two requirements.
\method{} uses unsupervised prototype discovery to form local buckets and align them across clients, enabling prototype-level expert aggregation while retaining mixed-task client trajectories for router training. 
At inference, \method{} performs sparse inference with one active expert while preserving nearly all soft-routing performance.
Our theoretical analysis explains why asymmetric aggregation is advantageous: it controls expert convergence in stationarity through off-pattern contamination, identifies the consensus error induced by fragmented router trajectories, and bounds sparse-inference risk.
On a heterogeneous multi-task benchmark with mainstream LLM backbones, \method{} consistently outperforms strong baselines, while ablations verify the effectiveness of our design.
\end{abstract}

\section{Introduction}
Federated learning (FL) enables multiple clients to train models collaboratively without sharing their private data~\citep{mcmahan2017fedavg,kairouz2021advances}.
This paradigm is particularly attractive for large language models (LLMs), whose instruction data may be distributed across users and organizations, with direct sharing restricted by policies and regulations on data privacy.
However, full-parameter LLM fine-tuning is often impractical in federated environments because of its substantial computational and communication costs.
Parameter-efficient fine-tuning (PEFT), especially low-rank adaptation (LoRA)~\citep{hu2022lora}, addresses this constraint by updating lightweight adapters while keeping the backbone frozen.
Recent work has therefore explored federated instruction tuning and federated LoRA as scalable strategies for decentralized LLM adaptation~\citep{zhang2023fedit,sun2024ffa,guo2025selective,bian2026fedalt,bian2026fedtreelora}.

Despite this progress, federated LLM instruction tuning remains fundamentally challenged by task heterogeneity.
Clients may differ in their domains, task distributions, and instruction styles; more importantly, a single client may itself mix several latent instruction patterns~\citep{marfoq2021mixture,feng2023fedins,talasso2026fedrouter}.
These patterns may demand different reasoning skills, output structures, or adaptation directions, despite residing on the same client.
When these patterns are compressed into a single client update, their adaptation signals lose the pattern coherence required for specialization before cross-client aggregation begins.

Nevertheless, current efforts on federated PEFT mainly organize heterogeneity at the granularity of clients.
Client correction, clustering, and personalization improve how client updates are optimized or shared~\citep{li2020fedprox,sattler2021cfl,sun2024ffa,guo2025selective}.
Mixture-of-Experts (MoE) and multi-expert LoRA add specialization capacity by routing inputs to lightweight experts~\citep{shazeer2017moe,fedus2022switch,liao2025hmora}.
Federated expert models bring this idea to decentralized settings~\citep{zec2021specialized,reisser2021fedmix,wang2025fedlease}.
Yet when expert allocation, training, and aggregation remain tied to client identity, incompatible patterns within one client are mixed into the same update and related patterns across clients remain unaligned.

The central challenge is a component-dependent mismatch in aggregation granularity. 
Put simply, experts need \emph{purity}---pattern-coherent updates that preserve specialization---whereas routers need \emph{contrast}---cross-pattern observations that enable expert comparison. 
Consequently, client-level aggregation is too coarse for experts, while independently training and aggregating a router for each pattern fragments the cross-pattern trajectory required for routing. No single aggregation granularity serves both components. 
Moreover, realizing such mixed-granularity aggregation is non-trivial: training budgets are limited and unevenly distributed across patterns, while experts and routers exhibit distinct convergence dynamics.

\method{} operationalizes this purity--contrast insight through asymmetric aggregation as illustrated in Figure~\ref{fig:fedweave_framework}.
Each client performs unsupervised prototype discovery without requiring task-identity labels and derives an adaptation signature for each local bucket.
The server uses these signatures to align related buckets across clients into global expert groups.
Local training interleaves bucket-homogeneous mini-batches, after which the server aggregates expert updates among matched prototypes.
In contrast, the broadcast global router is updated across each client's complete bucket trajectory and aggregated from one client-level delta, preserving cross-pattern contrast for expert selection.
\method{} trains with soft routing and supports both soft-mixture and sparse inference; in the sparse mode, only one LoRA expert is activated.

Our component-wise theory makes this asymmetry explicit: prototype-level aggregation gives $O(1/U)$ expert convergence to a contamination-controlled stationary neighborhood under non-vanishing routed weight; the router result gives a PL-convergence bound separating interleaving-order bias from reset consensus error; gap calibration then transfers these bounds to sparse-inference risk.
On a multi-task benchmark comprising CoEdIT, GSM8K, TweetEval, and ARC-C, \method{} achieves the best performance among the evaluated baselines on Llama3.2-3B and Gemma-2-2B and outperforms the closest client-level expert baseline across the tested heterogeneity levels~\cite{meta2024llama3p2,gemma2024gemma2}, while ablations verify the effectiveness of our design.
Sparse inference preserves nearly all soft-routing quality while activating only one expert per input.

Our contributions are summarized as follows:
\begin{itemize}
    \item We identify a purity--contrast principle for federated MoE-LoRA under intra-client task heterogeneity: experts require pattern-coherent updates for specialization, whereas routers require client-level trajectories that retain cross-pattern contrast.

    \item We propose \method{}, a federated MoE-LoRA framework which refines the granularity to the prototype level and aggregates expert and router in an asymmetric way.

    \item We derive component-wise convergence guarantees: contamination-controlled expert stationarity, PL router convergence separating order bias from reset consensus error, and sparse-risk transfer.
\end{itemize}

\section{Related Work}

\paragraph{Federated learning under heterogeneity.}
FedProx and SCAFFOLD correct client-level optimization~\citep{li2020fedprox,karimireddy2020scaffold}, while clustered FL learns models for latent client groups~\citep{sattler2021cfl,ghosh2020ifca}; both retain the client-level update structure of FedAvg~\citep{mcmahan2017fedavg,kairouz2021advances}.
Mixture-based FL instead models each client through latent sources~\citep{marfoq2021mixture}, and FedIns performs instance-adaptive inference under intra-client heterogeneity~\citep{feng2023fedins}.
\method{} instead turns aligned within-client prototypes into shared expert-update units.

\paragraph{Federated PEFT for LLMs.}
LoRA and QLoRA restrict adaptation to low-rank residuals~\citep{hu2022lora,dettmers2023qlora}, enabling federated instruction tuning~\citep{zhang2023fedit,yang2025fedlorasurvey}.
Federated LoRA methods address factor-aggregation bias through fixed or selective factors, server correction, alternating updates, exact inner-matrix aggregation, or gauge-aware consensus subspaces~\citep{sun2024ffa,guo2025selective,bian2025lorafair,koo2025loraa2,singhal2025fedsb,chen2026glora}.
Others accommodate heterogeneous ranks or structures, privacy noise, personalization, or layer-wise sharing~\citep{cho2024hetlora,wang2024flora,fan2025helora,lee2025fedsvd,bian2026fedalt,hao2025pf2lora,shen2026sdflora}.
FedTreeLoRA varies layer-wise sharing depth through a client hierarchy to couple statistical and functional heterogeneity~\citep{bian2026fedtreelora}.
These methods modify adapter parameterization or client-level sharing, whereas \method{} changes the within-client data unit that defines expert specialization.

\paragraph{Expert models and sparse routing.}
Sparse MoE models~\citep{shazeer2017moe,fedus2022switch} and MoE-LoRA~\citep{liao2025hmora} provide conditional specialization; LLaVA-MoLE and HotMoE address conflicts through sparse or hybrid routing~\citep{chen2024llavamole,huang2026hotmoe}, while sparse-and-orthogonal federated LoRA targets multi-task interference through implicit expert separation~\citep{yang2026sparseorthogonal}.
Federated MoE methods personalize experts by client~\citep{zec2021specialized,reisser2021fedmix}: FedMoE selects client-specific sub-MoEs from a pretrained sparse model~\citep{mei2024fedmoe}, while FedLEASE allocates LoRA experts to client clusters~\citep{wang2025fedlease}.
LoMo-Fed separates global and local experts in personalized vision heads~\citep{sang2026lomofed}, whereas FedRouter associates local clusters with task-centric adapters but uses nearest-centroid selection at evaluation~\citep{talasso2026fedrouter}.
In contrast, \method{} aggregates expert updates over aligned within-client prototypes while training a single global router from complete client trajectories, matching expert purity with router contrast.

\begin{figure*}[t]
    \centering
    \includegraphics[width=0.90\textwidth]{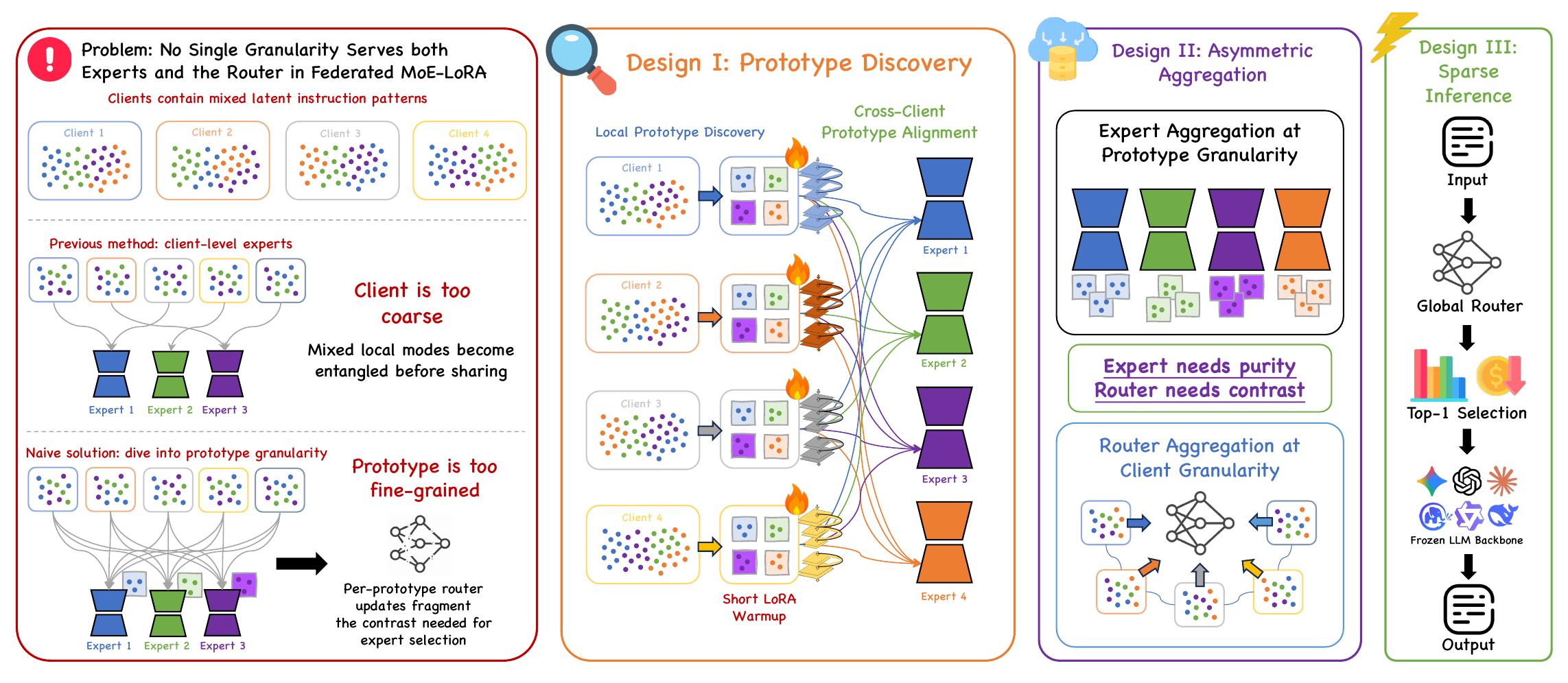}
    \caption{\textbf{FedWeave overview.} Aligned local prototypes provide pattern-coherent batch supervision for experts, whereas the single global router retains cross-pattern contrast by training over each client's complete bucket trajectory. The server aggregates the two components at different granularities.}
    \label{fig:fedweave_framework}
\end{figure*}

\section{Method}
\subsection{Problem Setup}
We consider $N$ clients with local datasets $\{\mathcal{D}_i\}_{i=1}^{N}$ and a shared frozen LLM backbone $W_0$.
The trainable model comprises LoRA experts $\Theta=\{\theta_m\}_{m=1}^{M^\star}$ and a global router $\varphi$.
We call a coherent local sub-distribution that induces similar adaptation behavior a \emph{latent instruction pattern}.
Task labels are used only to construct and audit the benchmark; they are not provided to \method{}.

Figure~\ref{fig:fedweave_framework} summarizes the pipeline.
Each client first discovers local buckets $\{\mathcal{B}_{i,c}\}_{c=1}^{C_i}$, the server then aligns buckets into global expert groups, and local training interleaves bucket-homogeneous mini-batches.
The central asymmetry is that aligned-prototype sample counts weight expert updates to preserve pattern purity, whereas the global router spans each client's complete trajectory to retain cross-pattern contrast.
Implementation details, diagnostics, and proofs are in
Appendices~\ref{app:implementation}--\ref{app:theory}.

\subsection{Local Prototype Discovery}
Each client embeds its local examples with the frozen backbone $W_0$ and partitions $\mathcal{D}_i$ into local prototype buckets $\{\mathcal{B}_{i,c}\}_{c=1}^{C_i}$.
We call the conceptual specialization unit a \emph{prototype} and its concrete local implementation a \emph{bucket}.
This exposes intra-client structure without sharing raw examples.

Semantic proximity identifies local modes but does not determine whether their updates are compatible.
Each bucket therefore performs a short LoRA warmup from the same initialization and records an adaptation signature $s_{i,c}$ from the resulting $B$ matrices, following prior observations of functional asymmetry between the LoRA factors~\citep{guo2025selective,sun2024ffa}.
The signature enables cross-client matching of buckets that can share an expert while preserving expert-update purity.

\subsection{Cross-Client Prototype Alignment}
The alignment must decide both how many specialization units exist and which units should share parameters across clients.
Each client runs $k$-means on frozen-backbone embeddings and selects $C_i$ by maximizing the average silhouette coefficient~\citep{rousseeuw1987silhouettes}.
The server forms $D=(d_{uv})$ from the mean layer-wise cosine distance across $L_B$ LoRA-$B$ blocks.
Let $\mathrm{Agg}_M(D)$ denote agglomerative clustering into $M$ groups, where $\mathcal K_g$ is the feasible candidate set. We select $M\in\mathcal K_g$ by silhouette score:
\begin{equation}
\begin{aligned}
 d_{uv}&=\frac{1}{L_B}\sum_{\ell=1}^{L_B}\!\left[1-\cos(s_u^{(\ell)},s_v^{(\ell)})\right],\\
 M^\star&=\arg\max_{M\in\mathcal K_g}\mathrm{Sil}(\mathrm{Agg}_M(D)).
\end{aligned}
\label{eq:signature_alignment}
\end{equation}
The one-time alignment fixes $M^\star$ and $a(i,c)\in\{1,\ldots,M^\star\}$ throughout federated training, mapping bucket $c$ of client $i$ to one global expert.
With $\mathcal U_m=\{(i,c):a(i,c)=m\}$ and the warmup adapter $\widetilde\theta_{i,c}$, the implementation initializes
\begin{equation}
 \theta_m^0=\frac{1}{|\mathcal U_m|}\sum_{(i,c)\in\mathcal U_m}\widetilde\theta_{i,c}.
\label{eq:expert_initialization}
\end{equation}
Semantic similarity exposes local modes, while adaptation similarity determines which modes share an expert without benchmark task labels.
K-means and agglomerative clustering are default implementations; the framework requires only coherent local buckets and cross-client alignment, and Table~\ref{tab:ablation_alignment_quality} evaluates alternatives.

\subsection{Local Training over Interleaved Buckets}
After prototype alignment, each communication round trains experts and the router with different local granularities.
For an example $x\in\mathcal{B}_{i,c}$, 
the router produces $r_m(x;\varphi)\geq0$ with $\sum_m r_m(x;\varphi)=1$, and the bucket objective is
\begin{equation}
\mathcal{L}_{i,c}(\Theta,\varphi)=\frac{1}{|\mathcal{B}_{i,c}|}
\sum_{(x,y)\in\mathcal{B}_{i,c}}
\ell\!\left(f_{W_0,\Theta,r(x;\varphi)}(x),y\right).
\label{eq:bucket_objective}
\end{equation}
Soft routing keeps router-expert training differentiable and supplies comparative signals to non-anchor experts while early assignments remain uncertain, reducing premature expert starvation.
It can also transmit off-pattern gradients; the theory records this routing leakage in each expert's off-target contribution weight. 

Each client retains the same fixed $E$-step mini-batch budget, allocates it across buckets approximately in proportion to bucket size with a small minimum for every nonempty bucket, and then interleaves the scheduled bucket identifiers.
At each scheduled step, the client draws a mini-batch only from the selected bucket, preserving pattern-coherent expert supervision.
The broadcast global-router state persists across the entire interleaved schedule, accumulating cross-pattern contrast across the client's buckets.

\subsection{Asymmetric Federated Aggregation}
Let $\Delta\theta_{i,m}^{t}$ and $\Delta\varphi_i^t$ denote client $i$'s local expert and router deltas at round $t$, and let
$n_{i,m}=\sum_{c:a(i,c)=m}|\mathcal{B}_{i,c}|$ be the local sample count aligned to expert $m$ for FedAvg-style weighting~\citep{mcmahan2017fedavg}.
The server updates
\begin{equation}
\begin{aligned}
\theta_m^{t+1}&=\theta_m^t+\sum_{i\in\mathcal{S}_t:n_{i,m}>0}
\frac{n_{i,m}}{\sum_{j\in\mathcal{S}_t} n_{j,m}}\Delta\theta_{i,m}^{t},\\
\varphi^{t+1}&=\varphi^t+\sum_{i\in\mathcal{S}_t}
\frac{|\mathcal{D}_i|}{\sum_{j\in\mathcal{S}_t}|\mathcal{D}_j|}\Delta\varphi_i^t.
\end{aligned}
\label{eq:asymmetric_aggregation}
\end{equation}

The expert rule weights each delta by matched-pattern supervision to preserve expert-update purity, whereas the router rule contributes one delta from each client's full interleaved trajectory to retain cross-bucket contrast. 

\begin{algorithm}[t]
\caption{\method{} training and sparse inference}
\label{alg:fedweave_main}
\small
\algsetup{linenosize=\small}
\begin{algorithmic}[1]
\REQUIRE Client datasets $\{\mathcal{D}_i\}$, frozen backbone $W_0$, rounds $T$, local-step budget $E$
\FORALL{clients $i$}
    \STATE Embed $\mathcal{D}_i$, select $C_i$, and form buckets $\{\mathcal{B}_{i,c}\}$
    \STATE Warm up one LoRA per bucket and send signatures $\{s_{i,c}\}$
\ENDFOR
\STATE Cluster signatures into $M^\star$ experts and obtain $a(i,c)$
\STATE Initialize global experts $\{\theta_m^0\}$ and router $\varphi^0$
\FOR{$t=0$ to $T-1$}
    \STATE Sample $\mathcal{S}_t$; broadcast $\{\theta_m^t\}$, $\varphi^t$, and assignments
    \FORALL{$i\in\mathcal{S}_t$ in parallel}
        \STATE Allocate $E$ steps by bucket sample counts and interleave bucket IDs
        \STATE Update soft-routed experts and the broadcast global-router state
        \STATE Return $\Delta\theta_{i,m}$ when $n_{i,m}>0$ and one $\Delta\varphi_i$
    \ENDFOR
    \STATE Aggregate expert and router deltas using Eq.~\eqref{eq:asymmetric_aggregation}
\ENDFOR
\STATE At inference, activate $\theta_{\arg\max_m r_m(x;\varphi)}$ for each input $x$
\end{algorithmic}
\end{algorithm}

\subsection{Sparse Inference}
Training uses the full softmax mixture so that each example supplies comparative supervision across experts.
At inference, \method{} selects
$\widehat m(x)=\arg\max_m r_m(x;\varphi)$ and activates only $\theta_{\widehat m(x)}$; the analysis below connects router optimization to the risk of this sparse decision.

\subsection{Theory of Asymmetric Aggregation}
The analysis formalizes three advantages of \method{}.
First, prototype conditioning gives $O(1/U)$ expert convergence to a stationary neighborhood controlled by routed off-pattern weight.
Second, persistent router training contracts under the PL condition while exposing deterministic interleaving bias instead of incurring the reset consensus residual.
Third, gap calibration transfers router suboptimality to Top-1 sparse-inference risk.
The full assumptions, the mixed-gradient conflict calculation, and all proofs are given in Appendix~\ref{app:theory}.

\paragraph{Expert convergence: stationarity under routed contamination.}
The first advantage is expert convergence: prototype-conditioned aggregation makes the deviation from an expert's target gradient depend on the off-pattern weight routed to that expert, rather than the full client mixture.
To quantify it, fix the router and the other experts, let expert $m$ target pattern $\kappa_m$, and let
$\chi_m(\theta_m)=\sum_{i,c,k\neq\kappa_m}\omega_{i,c,k,m}(\theta_m)$
be its off-target routed contribution weight, where $\omega_{i,c,k,m}$ is the normalized weight of pattern-$k$ examples in client $i$'s bucket $c$ routed to expert $m$.
Under bounded routed gradients and within-pattern shift $\delta_{\mathrm{pat}}$, the directional error is at most
$\varepsilon_m^u=2B_g\chi_m(\theta_m^u)+\delta_{\mathrm{pat}}$.
Here $F_{\kappa_m,m}$ is expert $m$'s clean loss on pattern $\kappa_m$; assume it is $L_E$-smooth and lower bounded, and that the routed stochastic direction has conditional variance at most $(\rho_m^u)^2\sigma_E^2$.
Let $g_m^u=\nabla F_{\kappa_m,m}(\theta_m^u)$,
$\rho_m^u=\rho_m(\theta_m^u)$,
$a_m^u=\eta_E\rho_m^u\leq1/L_E$,
$A_{m,U}=\sum_{u<U}\mathbb E[a_m^u]>0$, and
$\Delta_{E,m}^0=F_{\kappa_m,m}(\theta_m^0)-F_{\kappa_m,m}^\star$.
\begin{theorem}[Expert convergence under routed contamination]
\label{thm:expert-convergence-main}
\begin{equation}
\begin{aligned}
\tfrac{1}{A_{m,U}}\!\sum_{u<U}\mathbb E[a_m^u\|g_m^u\|^2]
&\leq\tfrac{2\Delta_{E,m}^0}{A_{m,U}}
+\tfrac{1}{A_{m,U}}\!\sum_{u<U}\mathbb E[a_m^u(\varepsilon_m^u)^2]\\
&\quad+\tfrac{L_E\sigma_E^2}{A_{m,U}}\!\sum_{u<U}\mathbb E[(a_m^u)^2].
\end{aligned}
\label{eq:biased_stationary_main}
\end{equation}
\end{theorem}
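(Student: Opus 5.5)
This is the standard biased-SGD descent argument on the $L_E$-smooth clean loss $F:=F_{\kappa_m,m}$, with effective step $a_m^u$. We model the expert iterate as $\theta_m^{u+1}=\theta_m^u-\eta_E\,h_m^u$. The routed stochastic direction $h_m^u$ is decomposed as
\[
h_m^u=\rho_m^u\,(g_m^u+b_m^u)+\xi_m^u.
\]
The bias $b_m^u$ collects the off-pattern routed gradients and the within-pattern shift. By the bounded-routed-gradient assumption and the definition of $\chi_m$, we have $\|b_m^u\|\le 2B_g\chi_m(\theta_m^u)+\delta_{\mathrm{pat}}=\varepsilon_m^u$; this is the directional-error bound quoted before the theorem. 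The noise term $\xi_m^u$ has conditional mean zero and $\mathbb E[\|\xi_m^u\|^2\mid\mathcal F_u]\le(\rho_m^u)^2\sigma_E^2$.

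First, apply the descent lemma:
\[
F(\theta^{u+1})\le F(\theta^u)-\eta_E\langle g_m^u,h_m^u\rangle+\tfrac{L_E\eta_E^2}{2}\|h_m^u\|^2 .
\]
Take conditional expectation given $\mathcal F_u$. Since $\rho_m^u$ is $\mathcal F_u$-measurable, the mean direction is $\rho_m^u(g+b)$, and the second moment is $\|\rho_m^u(g+b)\|^2$ plus the variance term. Using $a=\eta_E\rho$, this gives
\[
\mathbb E[F(\theta^{u+1})\mid\mathcal F_u]\le F(\theta^u)-a\langle g,g+b\rangle+\tfrac{L_E a^2}{2}\|g+b\|^2+\tfrac{L_E}{2}a^2\sigma_E^2 .
\]

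Second, use $a\le 1/L_E$ to get $\tfrac{L_Ea^2}{2}\|g+b\|^2\le\tfrac a2\|g+b\|^2$. Then the polarization identity
\[
-\langle g,g+b\rangle+\tfrac12\|g+b\|^2=-\tfrac12\|g\|^2+\tfrac12\|b\|^2
\]
yields
\[
\mathbb E[F(\theta^{u+1})\mid\mathcal F_u]\le F(\theta^u)-\tfrac a2\|g\|^2+\tfrac a2(\varepsilon_m^u)^2+\tfrac{L_E\sigma_E^2}{2}a^2 .
\]

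Third, take total expectation, sum over $u<U$, telescope, and use $F\ge F^\star$:
\[
\tfrac12\sum_{u<U}\mathbb E[a\|g\|^2]\le\Delta_{E,m}^0+\tfrac12\sum_{u<U}\mathbb E[a(\varepsilon_m^u)^2]+\tfrac{L_E\sigma_E^2}{2}\sum_{u<U}\mathbb E[a^2].
\]
Multiplying by $2/A_{m,U}$ gives exactly \eqref{eq:biased_stationary_main}. Since $A_{m,U}>0$ by hypothesis, the division is legitimate.

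The main obstacle is the modeling step. We must justify that the aggregated update for expert $m$, namely the sample-weighted average over $\mathcal U_m$ of soft-routed local deltas in Eq.~\eqref{eq:asymmetric_aggregation}, has conditional mean $\rho_m^u(g_m^u+b_m^u)$ with $\rho_m^u$ measurable at step $u$. Here $\rho_m^u$ is the aggregate routed mass and $\omega_{i,c,k,m}$ are the normalized weights it is split into. The clean target gradient is weighted by $1-\chi_m$ and the pattern-$k$ gradients by $\omega_{i,c,k,m}$. Writing the pattern-$k$ gradient as $g$ plus the within-pattern or off-target difference, each bounded by $2B_g$ (difference of two gradients of norm at most $B_g$) or $\delta_{\mathrm{pat}}$, gives the $\varepsilon$ bound. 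If multiple local steps per round create drift, I would fold it into $\delta_{\mathrm{pat}}$ or treat each $u$ as one effective step, as the statement's indexing suggests.
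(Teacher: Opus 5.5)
Your proposal is correct and follows the paper's proof: the $L_E$-smoothness descent step uses conditional mean $\rho_m^u(g_m^u+b_m^u)$ and variance $(\rho_m^u)^2\sigma_E^2$, then the identity $-\langle g,g+b\rangle=-\tfrac12\|g\|^2-\tfrac12\|g+b\|^2+\tfrac12\|b\|^2$ with $a_m^u\le 1/L_E$, telescoping, and the bias bound $\|b_m^u\|\le\varepsilon_m^u$ from Proposition~\ref{prop:routing-fidelity}. The modeling step you flag as the main obstacle is not derived in the paper; it is postulated directly as Assumption~\ref{ass:expert-smoothness}, so your argument matches the paper's at the same level of rigor.
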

With non-vanishing routed weight and bounded contamination, this is $O(1/U)$ convergence to a contamination--noise stationary neighborhood; $A_{m,U}$ records the effective optimization budget received by expert $m$ and thus exposes expert starvation.
With other terms matched, the prototype bound is no looser if $\chi_m(\theta_m^u)\leq\alpha_m^{\mathrm{client},u}$ at every active step, and strictly tighter if the inequality is strict with positive probability.

\paragraph{Router convergence: persistent versus reset-and-average.}
The second advantage is router convergence: FedWeave keeps one broadcast router state across interleaved buckets, so all bucket updates follow a single trajectory.
For comparison, reset-and-average maintains one state per bucket and averages them at the end, mirroring the router-state reset in the \emph{w/o client-router} ablation.
With experts fixed, let $R_{i,c}$ be the router loss on bucket $c$, and let
$R_i(\varphi)=\sum_c\pi_{i,c}R_{i,c}(\varphi)$,
$g_i(\varphi)=\nabla R_i(\varphi)$, and
$g_{i,c}(\varphi)=\nabla R_{i,c}(\varphi)$,
where $\pi_{i,c}=|\mathcal B_{i,c}|/|\mathcal D_i|$ is the bucket sample fraction.
Assume $L_R$-smooth bucket losses, a $\mu$-PL client objective~\citep{karimi2016linear}, conditional variance at most $\sigma_R^2$, and $0<\eta_R\leq1/L_R$.
Over $S$ local steps, let $\widehat g_{i,c}$ be the stochastic gradient on bucket $c$; for the persistent schedule and reset states, define
$q_s(\varphi_s)=\mathbb E[\widehat g_{i,c_s}(\varphi_s)\mid\varphi_s]-g_i(\varphi_s)$,
$\bar\varphi_s=\sum_c\pi_{i,c}\varphi_{c,s}$, and
$e_s=\sum_c\pi_{i,c}[g_{i,c}(\varphi_{c,s})-g_{i,c}(\bar\varphi_s)]$.
Let
\begin{equation}
\begin{aligned}
\mathcal E_{i,S}&=(1-\mu\eta_R)^S(R_i(\varphi_0)-R_i^\star)
+\tfrac{L_R\eta_R\sigma_R^2}{2\mu},\\
\mathcal O_{i,S}&=\tfrac{\eta_R}{2}\sum_{s<S}(1-\mu\eta_R)^{S-1-s}\mathbb E\|q_s(\varphi_s)\|^2,\\
\mathcal C_{i,S}&=\tfrac{\eta_R}{2}\sum_{s<S}(1-\mu\eta_R)^{S-1-s}\mathbb E\|e_s\|^2 .
\end{aligned}
\label{eq:router_residuals_main}
\end{equation}
\begin{theorem}[Persistent versus reset-and-average router convergence]
\label{thm:router-convergence-main}
Under matched initialization, depth, and effective pattern weights,
\begin{equation}
\begin{aligned}
\mathbb E[R_i(\varphi_S)-R_i^\star]&\leq\mathcal E_{i,S}+\mathcal O_{i,S},\\
\mathbb E[R_i(\bar\varphi_S)-R_i^\star]&\leq\mathcal E_{i,S}+\mathcal C_{i,S}.
\end{aligned}
\label{eq:persistent_router_main}
\end{equation}
\end{theorem}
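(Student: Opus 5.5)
Both bounds follow from one biased-gradient descent lemma for a $\mu$-PL, $L_R$-smooth objective, unrolled over $S$ steps. We apply it twice. For the persistent trajectory the bias is the schedule term $q_s(\varphi_s)$. For the reset-and-average trajectory we track the averaged state $\bar\varphi_s$, and the bias is the consensus term $e_s$.

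\textbf{Step 1 (generic one-step inequality).} Let $\psi_{s+1}=\psi_s-\eta_R d_s$ with $\mathbb E[d_s\mid\psi_s]=g_i(\psi_s)+b_s$ and conditional variance at most $\sigma_R^2$. Since each $R_{i,c}$ is $L_R$-smooth, so is $R_i$, and
\begin{equation*}
R_i(\psi_{s+1})\le R_i(\psi_s)-\eta_R\langle g_i,d_s\rangle+\tfrac{L_R\eta_R^2}{2}\|d_s\|^2 .
\end{equation*}
Take the conditional expectation and split $\mathbb E\|d_s\|^2=\|g_i+b_s\|^2+\mathrm{Var}$. The polarization identity gives $-\langle g,g+b\rangle=-\tfrac12\|g\|^2-\tfrac12\|g+b\|^2+\tfrac12\|b\|^2$. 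With $\eta_R\le 1/L_R$, the $\|g+b\|^2$ terms cancel against the smoothness term, leaving
\begin{equation*}
\mathbb E R_i(\psi_{s+1})\le R_i(\psi_s)-\tfrac{\eta_R}{2}\|g_i(\psi_s)\|^2+\tfrac{\eta_R}{2}\|b_s\|^2+\tfrac{L_R\eta_R^2\sigma_R^2}{2}.
\end{equation*}
The PL inequality $\|g_i\|^2\ge 2\mu(R_i-R_i^\star)$ turns this into a contraction by $(1-\mu\eta_R)$. Unrolling the recursion and summing the geometric series of variance terms, $\sum_s(1-\mu\eta_R)^s\cdot\tfrac{L_R\eta_R^2\sigma_R^2}{2}\le \tfrac{L_R\eta_R\sigma_R^2}{2\mu}$, gives exactly $\mathcal E_{i,S}$ plus the weighted bias sum $\tfrac{\eta_R}{2}\sum_{s<S}(1-\mu\eta_R)^{S-1-s}\mathbb E\|b_s\|^2$.

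\textbf{Step 2 (persistent schedule).} Set $\psi_s=\varphi_s$ and $d_s=\widehat g_{i,c_s}(\varphi_s)$. By definition $b_s=q_s(\varphi_s)$. The interleaving order may be deterministic or random; either way we condition on $\varphi_s$ and on the scheduled bucket. The assumption of matched effective pattern weights is what lets the reference objective be $R_i$ with weights $\pi_{i,c}$. Step 1 then yields the first bound directly.

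\textbf{Step 3 (reset-and-average).} Each bucket state evolves as $\varphi_{c,s+1}=\varphi_{c,s}-\eta_R\widehat g_{i,c}(\varphi_{c,s})$. Averaging with weights $\pi_{i,c}$ gives
\begin{equation*}
\bar\varphi_{s+1}=\bar\varphi_s-\eta_R\sum_c\pi_{i,c}\widehat g_{i,c}(\varphi_{c,s}).
\end{equation*}
Its conditional mean is $\sum_c\pi_{i,c}g_{i,c}(\varphi_{c,s})=g_i(\bar\varphi_s)+e_s$. Here "matched depth" means each bucket state takes $S$ steps at rate $\eta_R$ and the averaged process is viewed as an $S$-step process. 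The aggregate noise is a $\pi$-weighted average of independent bucket noises, so its variance is at most $\sum_c\pi_{i,c}^2\sigma_R^2\le\sigma_R^2$. Applying Step 1 with $b_s=e_s$ and matched initialization $\bar\varphi_0=\varphi_0$ gives the second bound.

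\textbf{Main obstacle.} The delicate point is Step 3. One must check that the conditioning is legitimate: $e_s$ is measurable with respect to the joint bucket states, not $\bar\varphi_s$ alone. One must also confirm that the descent lemma is applied to $R_i$ at $\bar\varphi_s$ while the gradients are evaluated at the individual $\varphi_{c,s}$. Conditioning on the full state $(\varphi_{c,s})_c$ should resolve both, since $R_i(\bar\varphi_s)$ is a function of that state.
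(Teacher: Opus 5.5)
Your proposal is correct and follows the paper's proof: the same biased-descent identity with the nonpositive $\|g+b\|^2$ coefficient dropped via $\eta_R\le1/L_R$, then PL contraction and unrolling, applied once to $\varphi_s$ with bias $q_s$ and once to the weighted mean $\bar\varphi_s$ with bias $e_s$ after conditioning on the full bucket state $\{\varphi_{c,s}\}_c$. One fix is needed in Step 2: condition on $\varphi_s$ alone, not also on the scheduled bucket, because under a randomized schedule conditioning on $c_s$ makes the bias $g_{i,c_s}(\varphi_s)-g_i(\varphi_s)$ rather than $q_s(\varphi_s)=0$ (under deterministic interleaving the two coincide).
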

Both procedures share the same PL contraction and stochastic-noise floor.
Sample-count-proportional randomized bucket sampling gives $\mathcal O_{i,S}=0$; \method{}'s deterministic interleaving incurs $\mathcal O_{i,S}$, whereas reset-and-average incurs $\mathcal C_{i,S}$ from bucket-local dispersion.

\paragraph{Sparse inference: router error controls Top-1 risk.}
Finally, although training uses a soft mixture, inference activates only one expert; insufficient router probability on the latent oracle expert can therefore incur excess loss.
In the clean one-pattern-per-expert regime, let $m^\dagger(x)$ be the oracle expert, $R_{i,\mathrm{oracle}}$ its risk, and
$S_i(\varphi)=\mathbb E[1-r_{m^\dagger(x)}(x;\varphi)]$.
Assume the expert-loss-gap calibration
$c_{\mathrm{gap}}S_i(\varphi)\leq R_i(\varphi)-R_{i,\mathrm{oracle}}+\beta_{\mathrm{mix}}$
with $c_{\mathrm{gap}}>0$, and let
$A_i=[R_i^\star-R_{i,\mathrm{oracle}}]_+$.
If non-oracle excess loss is bounded by $\Delta_{\max}$, then the Top-1 risk $R_{\mathrm{sparse},i}$ obeys
\begin{equation}
R_{\mathrm{sparse},i}-R_{i,\mathrm{oracle}}
\leq\frac{2\Delta_{\max}}{c_{\mathrm{gap}}}
[R_i(\varphi)-R_i^\star+A_i+\beta_{\mathrm{mix}}].
\label{eq:sparse_risk_main}
\end{equation}
Taking expectations, Theorem~\ref{thm:router-convergence-main} supplies $\mathcal E_{i,S}+\mathcal O_{i,S}$ for $\varphi_S$ and $\mathcal E_{i,S}+\mathcal C_{i,S}$ for $\bar\varphi_S$.

\begin{table*}[t]
\centering
\maintablesetup
\begin{tabularx}{0.94\textwidth}{@{}p{\mainmodelcol}lYYYYY@{}}
\toprule
\multicolumn{1}{l}{Model}
& \multicolumn{1}{l}{Method}
& \multicolumn{1}{c}{Macro Avg.}
& \multicolumn{1}{c}{CoEdIT}
& \multicolumn{1}{c}{GSM8K}
& \multicolumn{1}{c}{TweetEval}
& \multicolumn{1}{c}{ARC-C} \\
\midrule
\panelhead{7}{Task Scores $\uparrow$} \\
\midrule
\multirow{5}{*}{\itshape Llama3.2-3B}
& FedIT
& \secondres{0.5673}{0.0051}
& \bestres{0.6293}{0.0090}
& \res{0.3033}{0.0138}
& \res{0.6633}{0.0194}
& \res{0.6733}{0.0104} \\
& FFA-LoRA
& \res{0.5050}{0.0028}
& \res{0.5368}{0.0114}
& \res{0.2392}{0.0240}
& \res{0.6300}{0.0180}
& \res{0.6142}{0.0161} \\
& \mbox{FedSA-LoRA}
& \res{0.5558}{0.0043}
& \res{0.5999}{0.0048}
& \res{0.3067}{0.0095}
& \res{0.6433}{0.0243}
& \res{0.6733}{0.0063} \\
& FedLEASE
& \res{0.5649}{0.0090}
& \res{0.5948}{0.0537}
& \secondres{0.3142}{0.0161}
& \secondres{0.6742}{0.0236}
& \secondres{0.6767}{0.0052} \\
\fwrow
{\fwbestres{0.5872}{0.0047}}
{\fwsecondres{0.6190}{0.0067}}
{\fwbestres{0.3283}{0.0076}}
{\fwbestres{0.6875}{0.0225}}
{\fwbestres{0.7142}{0.0104}}
\midrule
\multirow{5}{*}{\itshape Gemma-2-2B}
& FedIT
& \secondres{0.4839}{0.0174}
& \secondres{0.3340}{0.0054}
& \res{0.3033}{0.0113}
& \secondres{0.6592}{0.0379}
& \secondres{0.6392}{0.0250} \\
& FFA-LoRA
& \res{0.4544}{0.0071}
& \res{0.3276}{0.0051}
& \res{0.2642}{0.0236}
& \res{0.6492}{0.0300}
& \res{0.5767}{0.0281} \\
& \mbox{FedSA-LoRA}
& \res{0.4664}{0.0081}
& \res{0.3297}{0.0055}
& \secondres{0.3108}{0.0163}
& \res{0.6225}{0.0139}
& \res{0.6025}{0.0261} \\
& FedLEASE
& \res{0.4755}{0.0074}
& \res{0.3313}{0.0131}
& \res{0.2942}{0.0208}
& \res{0.6408}{0.0456}
& \res{0.6358}{0.0118} \\
\fwrow
{\fwbestres{0.5163}{0.0154}}
{\fwbestres{0.3442}{0.0087}}
{\fwbestres{0.3333}{0.0456}}
{\fwbestres{0.6892}{0.0445}}
{\fwbestres{0.6983}{0.0080}}
\midrule
\panelhead{7}{Teacher-Forced Losses $\downarrow$} \\
\midrule
\multirow{5}{*}{\itshape Llama3.2-3B}
& FedIT
& \secondres{0.7496}{0.0095}
& \res{0.8537}{0.0144}
& \res{0.6258}{0.0054}
& \res{0.7070}{0.0355}
& \secondres{0.8120}{0.0244} \\
& FFA-LoRA
& \res{0.9113}{0.0084}
& \res{1.0396}{0.0064}
& \res{0.8720}{0.0139}
& \res{0.8110}{0.0272}
& \res{0.9226}{0.0027} \\
& \mbox{FedSA-LoRA}
& \res{0.8214}{0.0130}
& \res{0.9457}{0.0044}
& \res{0.6874}{0.0268}
& \res{0.7525}{0.0169}
& \res{0.9000}{0.0126} \\
& FedLEASE
& \res{0.7600}{0.0170}
& \secondres{0.8490}{0.0328}
& \secondres{0.6080}{0.0062}
& \secondres{0.7050}{0.0353}
& \res{0.8782}{0.0531} \\
\fwrow
{\fwbestres{0.7264}{0.0076}}
{\fwbestres{0.8046}{0.0071}}
{\fwbestres{0.6057}{0.0097}}
{\fwbestres{0.6931}{0.0385}}
{\fwbestres{0.8023}{0.0214}}
\midrule
\multirow{5}{*}{\itshape Gemma-2-2B}
& FedIT
& \secondres{0.7443}{0.0166}
& \secondres{0.8469}{0.0134}
& \secondres{0.5062}{0.0015}
& \secondres{0.7181}{0.0212}
& \secondres{0.9060}{0.0627} \\
& FFA-LoRA
& \res{0.8174}{0.0131}
& \res{0.9180}{0.0091}
& \res{0.5799}{0.0026}
& \res{0.7621}{0.0213}
& \res{1.0096}{0.0515} \\
& \mbox{FedSA-LoRA}
& \res{0.8255}{0.0020}
& \res{0.9135}{0.0053}
& \res{0.5613}{0.0279}
& \res{0.7821}{0.0275}
& \res{1.0452}{0.0496} \\
& FedLEASE
& \res{0.8030}{0.0155}
& \res{0.8677}{0.0363}
& \res{0.5184}{0.0237}
& \res{0.8372}{0.0353}
& \res{0.9888}{0.0441} \\
\fwrow
{\fwbestres{0.6954}{0.0132}}
{\fwbestres{0.7881}{0.0188}}
{\fwbestres{0.4915}{0.0034}}
{\fwbestres{0.6802}{0.0338}}
{\fwbestres{0.8218}{0.0380}}
\bottomrule
\end{tabularx}
\caption{Main results at $\alpha=0.3$ (mean $\pm$ standard deviation over seeds 42/43/44). ``Macro Avg.'' is the unweighted mean across tasks within each panel. Best and second-best results are \textbf{bold} and \underline{underlined}.}
\label{tab:main_results}
\end{table*}

\section{Experiments}
\label{sec:experiments}

\subsection{Experimental Setup}
\paragraph{Benchmark and Datasets.}
Our controlled heterogeneous multi-task benchmark combines CoEdIT~\citep{raheja2023coedit}, GSM8K~\citep{cobbe2021gsm8k}, TweetEval sentiment classification (TweetEval)~\citep{barbieri2020tweeteval}, and ARC-Challenge (ARC-C)~\citep{clark2018arc}.
For each seed, we sample disjoint per-task subsets of 2,000/100/400 train/validation/test examples from the pooled sources, yielding 1,600 test examples in total.
Equal task budgets keep client mixtures and macro-averages comparable across tasks.

\paragraph{Federated training.}
We partition training data among 20 clients by task-identity Dirichlet sampling; smaller $\alpha$ means stronger client imbalance.
The main setting is $\alpha=0.3$; mean $\pm$ standard deviation over seeds 42/43/44 jointly reflects data subsampling, client partitioning, and training randomness.
Task labels construct and audit this controlled benchmark but are never exposed to \method{}.
The benchmark contains distinct instruction patterns that can be recovered from the frozen representations; Table~\ref{tab:ablation_alignment_quality} reports the resulting alignment quality.
We evaluate Llama3.2-3B~\citep{meta2024llama3p2} and Gemma-2-2B~\citep{gemma2024gemma2} with all 20 clients participating for 20 rounds and 10 local optimizer steps per round.
All methods use the same local-step budget; \method{} redistributes its 10 steps across buckets according to their sample counts.
All runs use AdamW with learning rate 1e-4, effective batch size 8, gradient clipping at 1.0, and maximum length 512.
Single-adapter baselines use one rank-32 LoRA, whereas \method{} and FedLEASE use rank 8 per expert; all LoRAs use scaling factor 16, dropout 0.05, and query/value projections.
\method{} and FedLEASE select the expert count adaptively by silhouette search over prototype and client signatures, respectively; the reported main runs selected four experts for both methods.
For either backbone, this realized count matches the total LoRA capacity of its rank-32 single-adapter counterpart.
\method{} trains a hidden-size-512 global router with learning rate 5e-5, zero dropout.
Experts average their aligned warmup adapters, whereas the two-layer router uses default random initialization and initially distributes weight across all experts via soft routing.
At test time, sparse inference activates one expert.

\paragraph{Evaluation.}
CoEdIT uses mean ROUGE-1/2/Lsum F1; GSM8K and ARC-C use exact-answer accuracy; and TweetEval uses accuracy.
Greedy decoding permits 64/192/4/4 new tokens for CoEdIT/GSM8K/TweetEval/ARC-C; full settings are in Appendix~\ref{app:implementation}.

\subsection{Baselines and Comparison Scope}
Holding the backbone, LoRA capacity, data, and global-model target fixed, we compare four representative federated LoRA approaches.
\textbf{FedIT} jointly trains both factors of one shared LoRA and aggregates client deltas by sample-weighted FedAvg.
\textbf{FFA-LoRA} freezes the randomly initialized $A$ factor and trains and aggregates only the zero-initialized $B$ factor.
\textbf{FedSA-LoRA} aggregates $A$ globally while retaining client-local $B$ states across rounds; global evaluation pairs the shared $A$ with their sample-weighted mean.
\textbf{FedLEASE}, the closest empirical baseline, selects the client-cluster count by silhouette score and maintains one expert and router for each selected cluster.
Backbone-specific trainable-parameter counts are tabulated in Table~\ref{tab:parameter_budget_supp} of Appendix~\ref{app:implementation}.
\method{} remains in the same trainable-parameter range as the full-adapter baselines FedIT/FedSA and smaller than FedLEASE.
\subsection{Overall Effectiveness}

Table~\ref{tab:main_results} shows that, at $\alpha=0.3$, \method{} raises the best-baseline macro-average score from 0.5673 to 0.5872 on Llama3.2-3B and from 0.4839 to 0.5163 on Gemma-2-2B, while reducing macro-average loss from 0.7496 to 0.7264 and from 0.7443 to 0.6954, respectively.
On Llama3.2-3B it leads on GSM8K, TweetEval, and ARC-C, while FedIT remains strongest on CoEdIT; on Gemma-2-2B it ranks first on all four tasks.
The gain therefore persists across two architectures without requiring a task-specific expert assignment.
Figure~\ref{fig:routing_heatmap} shows that \method{} assigns the four tasks to distinct dominant experts (E1/E4/E3/E2), whereas FedLEASE reuses E3 for CoEdIT and GSM8K and distributes the remaining tasks more broadly.

\begin{figure}[t]
    \centering
    \includegraphics[width=0.90\linewidth]{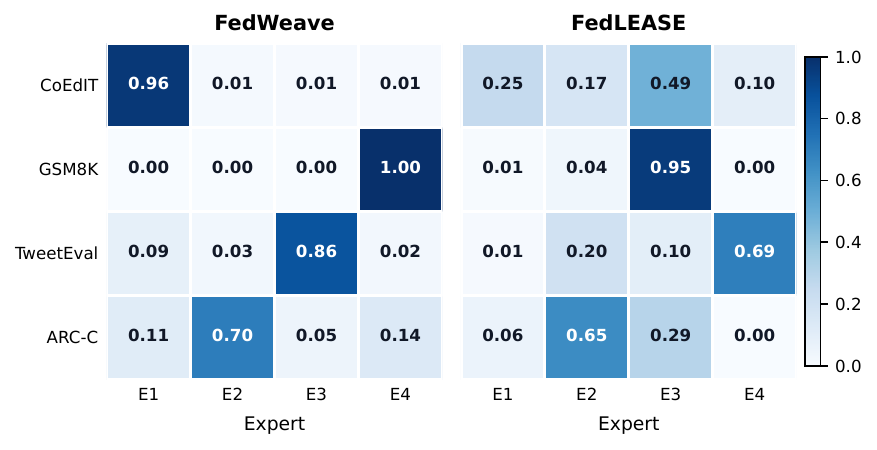}
    \caption{Row-normalized task--expert routing weights on Llama3.2-3B at $\alpha=0.3$. \method{} learns a distinct dominant expert for each task compared to FedLEASE.}
    \label{fig:routing_heatmap}
\end{figure}

\subsection{Robustness to Client Heterogeneity}
Figure~\ref{fig:trend_heterogeneity} shows that, against FedLEASE on Llama3.2-3B, \method{} improves macro-average score by 0.0150, 0.0223, and 0.0106 for $\alpha=0.1,0.3,0.5$, respectively, while reducing macro-average loss by 0.0256, 0.0336, and 0.0245.
The advantage remains positive at all three heterogeneity levels and is largest at $\alpha=0.3$.

\begin{figure}[t]
\centering
\includegraphics[width=\linewidth]{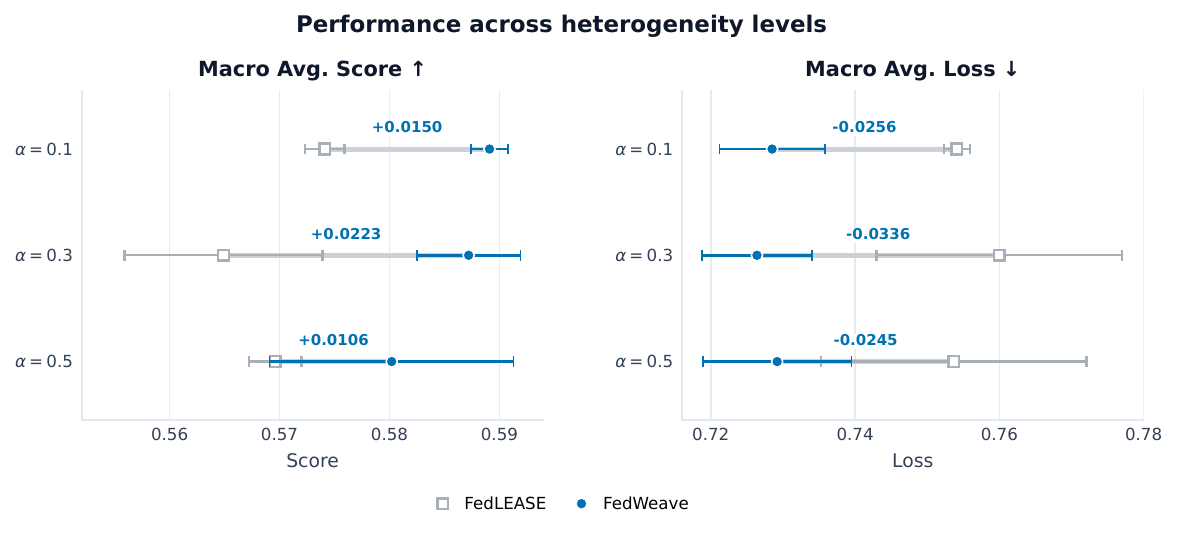}
\caption{Macro-average score and teacher-forced loss on Llama3.2-3B across Dirichlet heterogeneity levels. Shaded regions show one standard deviation over three seeds.}
\label{fig:trend_heterogeneity}
\end{figure}

\subsection{Ablation Study}
\subsubsection{Aggregation Granularity}
With the same backbone, data, and expert capacity, \emph{w/o prototype-expert} uses Client/Client aggregation, whereas \emph{w/o client-router} uses Prototype/Prototype aggregation with per-bucket router resets and sample-weighted endpoints.

\begin{table*}[t]
\centering
\captionsetup{font=small,skip=2pt}
\small
\setlength{\tabcolsep}{2.5pt}
\renewcommand{\arraystretch}{0.90}
\begin{tabularx}{0.81\textwidth}{@{}>{\raggedright\arraybackslash}p{0.18\textwidth}>{\centering\arraybackslash}p{0.095\textwidth}>{\centering\arraybackslash}p{0.095\textwidth}|YYYYY@{}}
\toprule
Method & Expert & Router & Average & CoEdIT & GSM8K & TweetEval & ARC-C \\
\midrule
\multicolumn{8}{@{}l}{\textit{Task Metrics} $\uparrow$} \\
\midrule
w/o prototype-expert & Client & Client & 0.5649 & 0.5948 & 0.3142 & 0.6742 & 0.6767 \\
w/o client-router & Prototype & Prototype & 0.5578 & 0.6013 & 0.3225 & 0.6408 & 0.6667 \\
\textbf{\method{}} & \textbf{Prototype} & \textbf{Client} & \textbf{0.5872} & \textbf{0.6190} & \textbf{0.3283} & \textbf{0.6875} & \textbf{0.7142} \\
\midrule
\multicolumn{8}{@{}l}{\textit{Task Losses} $\downarrow$} \\
\midrule
w/o prototype-expert & Client & Client & 0.7600 & 0.8490 & 0.6080 & 0.7050 & 0.8782 \\
w/o client-router & Prototype & Prototype & 0.7692 & 0.8066 & 0.6071 & 0.7622 & 0.9008 \\
\textbf{\method{}} & \textbf{Prototype} & \textbf{Client} & \textbf{0.7264} & \textbf{0.8046} & \textbf{0.6057} & \textbf{0.6931} & \textbf{0.8023} \\
\bottomrule
\end{tabularx}
\caption{Aggregation-granularity ablation. \emph{w/o prototype-expert}: Client/Client; \emph{w/o client-router}: Prototype/Prototype with per-bucket router resets and sample-weighted endpoints; \method{}: Prototype/Client with one global router following complete client trajectories.}
\label{tab:ablation_asymmetry}
\vspace{-4pt}
\end{table*}

\begin{table*}[t]
\centering
\captionsetup{font=small,skip=2pt}
\small
\setlength{\tabcolsep}{3.0pt}
\renewcommand{\arraystretch}{0.90}
\begin{tabular*}{0.90\textwidth}{@{\extracolsep{\fill}}lcccc|ccc|cc@{}}
\toprule
\multirow{2}{*}{Local Clusterer} & \multirow{2}{*}{Complexity} & \multicolumn{3}{c|}{Local Alignment} & \multicolumn{3}{c|}{Global Alignment} & \multirow{2}{*}{Macro Score $\uparrow$} & \multirow{2}{*}{Macro Loss $\downarrow$} \\
\cmidrule(lr){3-5}\cmidrule(lr){6-8}
& & Purity $\uparrow$ & NMI $\uparrow$ & ARI $\uparrow$ & Purity $\uparrow$ & NMI $\uparrow$ & ARI $\uparrow$ & & \\
\midrule
K-Means & $O(I n_i C_i d_e)$ & 0.9685 & 0.9051 & 0.9288 & 0.9515 & 0.8567 & 0.8763 & 0.5872 & 0.7264 \\
Agglomerative & $O(n_i^2 \log n_i)$ & 0.7680 & 0.2635 & 0.2669 & 0.2522 & 0.0044 & 0.0000 & 0.5684 & 0.7526 \\
Spectral & $O(n_i^3)$ & 0.9728 & 0.9000 & 0.9013 & 0.9499 & 0.8649 & 0.8780 & \textbf{0.5889} & \textbf{0.7242} \\
\bottomrule
\end{tabular*}
\caption{Prototype alignment quality under different local clusterers. Purity measures dominant-label agreement, NMI measures normalized information agreement, and ARI measures chance-corrected pairwise agreement; exact definitions are in Appendix~\ref{app:alignment_metrics}. Complexities are per client, where $n_i=|\mathcal D_i|$, $C_i$ is the selected local bucket count, $d_e$ is the frozen embedding dimension, and $I$ is the number of $k$-means iterations; server-side signature alignment is fixed.}
\label{tab:ablation_alignment_quality}
\vspace{-4pt}
\end{table*}

Table~\ref{tab:ablation_asymmetry} shows macro-average gains of 0.0223 and 0.0294 over \emph{w/o prototype-expert} and \emph{w/o client-router}, with loss reductions of 0.0336 and 0.0428.
\method{} leads every task score and loss; against \emph{w/o client-router}, its largest score gains on TweetEval and ARC-C support preserving cross-bucket contrast within one router trajectory.
The \emph{w/o prototype-expert} comparison supports forming experts from coherent bucket contexts.
Because the \emph{w/o client-router} implementation has effective weight
$\widetilde\pi_c\propto a_cH_c$, where $a_c$ is the endpoint aggregation coefficient and $H_c$ is the reset depth for bucket $c$, its comparison with \method{} captures both router-state fragmentation and bucket reweighting, as predicted by the asymmetric analysis.
These comparisons support the component-dependent purity--contrast design.

\subsubsection{Prototype Discovery}
Table~\ref{tab:ablation_alignment_quality} compares client-side clusterers.
$k$-means and spectral clustering achieve nearly identical global purity (0.9515 versus 0.9499) and similar macro scores (0.5872 versus 0.5889).
However, $k$-means scales as $O(I n_i C_i d_e)$, whereas spectral clustering costs $O(n_i^3)$.
We therefore adopt $k$-means as the default because it preserves high-quality prototype discovery at substantially lower asymptotic cost.

\subsubsection{Sparse Inference}
Using the same trained checkpoints over three seeds, Figure~\ref{fig:ablation_sparse_inference} shows that sparse inference (activating the highest-scoring expert) reaches 0.5872 versus 0.5883 for soft routing and lowers mean latency from 3177 to 2147 ms (32.4\%).
Activating two experts reaches 0.5890 at 2587 ms, only 0.0018 above sparse inference at higher latency.

\begin{figure}[H]
\centering
\includegraphics[width=0.90\linewidth]{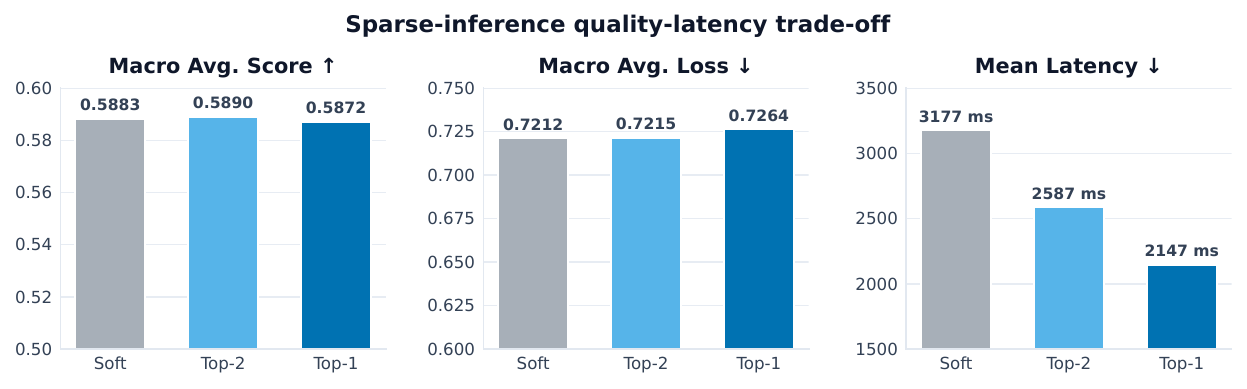}
\caption{Sparse-inference quality--latency trade-off over three seeds. Sparse inference activates the Top-1 expert at lower latency but nearly preserves soft-routing quality.}
\label{fig:ablation_sparse_inference}
\end{figure}

\section{Limitations}
\paragraph{Evaluation scope.}
Due to the limited computational resources, our study focuses on 2B--3B backbones and controlled four-task tests.
This scope enables matched comparisons across methods, backbones, and heterogeneity levels, while broader validation on larger models and naturally occurring mixtures remains future work.

\paragraph{Recoverable specialization.}
\method{} assumes that local patterns are sufficiently separable in representation and adaptation-signature space for prototype recovery and cross-client alignment.
The high alignment purity in Table~\ref{tab:ablation_alignment_quality} supports this condition in the studied regime, where pronounced task heterogeneity makes specialization most relevant.
When heterogeneity is weak and adaptation signatures become less distinguishable, the benefit of specialization may diminish.

\paragraph{Privacy and discovery overhead.}
\method{} currently does not incorporate differential privacy or secure aggregation for its update-derived signatures; integrating these mechanisms is an important extension.
Prototype discovery also introduces a one-time first-stage overhead from embedding, short LoRA warmups, and signature upload, which is amortized across subsequent communication rounds.
Detailed computation, communication, and parameter accounting is provided in Appendix~\ref{app:cost_accounting}.

\section{Conclusion}
\label{sec:conclusion}
\method{} addresses intra-client task heterogeneity by assigning specialization and routing to different aggregation units: aligned local prototypes form coherent expert updates, while a single global router trained from complete client trajectories retains cross-pattern contrast.
The component-wise theory mirrors this design: prototype conditioning controls routing-weighted expert contamination, persistent trajectories replace the reset consensus residual with an explicit schedule-order residual, and gap calibration connects router optimization to sparse-inference risk.
Across two backbones, it outperforms the evaluated baselines on the controlled benchmark, and its advantage over FedLEASE persists across three heterogeneity levels.
The granularity ablation supports the asymmetric design, while sparse inference preserves nearly all soft-routing quality with one active expert.
These results identify aggregation granularity, rather than expert count alone, as a central design choice for federated MoE-LoRA.
Future work should test natural, drifting streams with privacy-preserving online alignment.

\bibliography{references}

\appendix
\setcounter{secnumdepth}{1}
\small

\section{Implementation and Reproducibility Details}
\label{app:implementation}

Table~\ref{tab:technical_defaults} collects the resolved settings used by the reported runs.
Local discovery uses $\ell_2$-normalized, mean-pooled last-layer prompt representations from the frozen backbone.
For a client with $n$ examples, the candidate counts are $2,\ldots,\min(8,n-1)$; the default is $k$-means with 10 restarts, selected by Euclidean silhouette score.
Each bucket then warms up a rank-8 LoRA from the same initialization for 10 steps with batch size 8.
Its signature concatenates the LoRA-$B$ factors on query/value projections across layers.
The server computes the mean layer-wise cosine distance between signatures and applies silhouette-selected agglomerative clustering over the same feasible count range, so the number of global experts is data-adaptive; this procedure selects four experts in the reported main runs.
FedLEASE independently searches from 2 to at most 8 client clusters by the same silhouette criterion and likewise selects four experts in the reported runs.

All 20 clients participate in every communication round.
The router receives the attention-mask-weighted mean of the frozen backbone's last hidden states, computed with LoRA adapters disabled, and uses a two-layer MLP with hidden size 512 followed by a full softmax over experts.
The single global router is broadcast to each client, updated across its interleaved bucket schedule, and returned as one delta per round.
Generation uses greedy decoding (\texttt{do\_sample=False}, \texttt{num\_beams=1}).
The code package accompanying the paper contains training and evaluation scripts, deterministic data-construction code, and a manifest recording source dataset identifiers, split pools, sample budgets, and seed protocols.
When executed, the generated data cache records the source split and row index of every sampled example.

\begin{table*}[t]
\centering
\caption{Resolved implementation settings for the reported experiments. ``Effective batch'' counts gradient accumulation; only the memory-bound Gemma FedLEASE runs use micro-batch 4 with accumulation 2. Full-softmax training uses every selected expert, so the stored cumulative-probability threshold is inactive in these runs.}
\label{tab:technical_defaults}
\small
\setlength{\tabcolsep}{5pt}
\renewcommand{\arraystretch}{1.08}
\begin{tabularx}{0.96\textwidth}{@{}p{0.16\textwidth}p{0.22\textwidth}X@{}}
\toprule
Group & Setting & Resolved value \\
\midrule
Benchmark
& Data and partition
& CoEdIT, GSM8K, TweetEval sentiment, and ARC-Challenge; 2,000/100/400 train/validation/test examples per task; 20 clients; main Dirichlet $\alpha=0.3$; seeds 42/43/44. \\
\addlinespace[2pt]
Infrastructure
& Hardware and software
& One NVIDIA GeForce RTX 4090 (48 GB GPU memory; 49,140 MiB reported by \texttt{nvidia-smi}); Intel Xeon Platinum 8470Q (208 logical CPUs); 1.0 TiB system RAM; Ubuntu 22.04.5 LTS; NVIDIA driver 580.105.08 (CUDA compatibility 13.0); Python 3.12.3; PyTorch 2.8.0+\texttt{cu128} (CUDA 12.8, cuDNN 9.10.2); Transformers 5.0.0; PEFT 0.18.1; Datasets 3.6.0; scikit-learn 1.8.0; NumPy 2.3.2. \\
\addlinespace[2pt]
Federation
& Participation and budget
& Full participation; 20 rounds; 10 local optimizer steps per client and round. \\
\addlinespace[2pt]
Optimization
& AdamW
& Adapter learning rate $10^{-4}$; constant schedule; weight decay 0; gradient clipping 1.0; effective batch size 8. \\
\addlinespace[2pt]
LoRA
& Shared configuration
& Query/value projections; scaling factor 16; dropout 0.05; rank 32 for FedIT, FFA-LoRA, and FedSA-LoRA; rank 8 per adaptively selected expert for FedLEASE and \method{}. \\
\addlinespace[2pt]
Router
& Architecture and training
& Two-layer MLP; hidden size 512; dropout 0; learning rate $5\times10^{-5}$; full softmax over the selected experts; no task-label or load-balancing loss. \\
\addlinespace[2pt]
Discovery
& Local buckets
& Normalized frozen representations; $k$-means with 10 restarts; candidate count $2,\ldots,\min(8,n-1)$; Euclidean silhouette selection. \\
\addlinespace[2pt]
Discovery
& Signatures and alignment
& 10 warmup steps per bucket with batch size 8; embedding batch size 16; LoRA-$B$ query/value signatures; mean layer-wise cosine distance; silhouette-selected agglomerative alignment. \\
\addlinespace[2pt]
Evaluation
& Sequence and decoding
& Maximum input length 512; greedy decoding; 64/192/4/4 new tokens for CoEdIT/GSM8K/TweetEval/ARC-C; sparse inference (Top-1 expert activation). \\
\bottomrule
\end{tabularx}
\end{table*}

\begin{table*}[t]
\centering
\caption{Trainable-parameter budget for the two backbones. Counts exclude the frozen backbone, optimizer state, and replicated client-local copies. ``Adaptive (4)'' means that silhouette selection determined the expert count and selected four in the reported runs.}
\label{tab:parameter_budget_supp}
\small
\setlength{\tabcolsep}{6pt}
\renewcommand{\arraystretch}{1.08}
\begin{tabular}{@{}lccccc@{}}
\toprule
Method & LoRA rank & Experts & Router copies & \multicolumn{2}{c}{Trainable parameters (M)} \\
\cmidrule(lr){5-6}
& & & & Llama3.2-3B & Gemma-2-2B \\
\midrule
FedIT & 32 & 1 & -- & 9.18 & 6.39 \\
FFA-LoRA & 32 & 1 & -- & 3.67 & 2.56 \\
FedSA-LoRA & 32 & 1 & -- & 9.18 & 6.39 \\
FedLEASE & 8 & Adaptive & 4 cluster-specific & 15.48 & 11.12 \\
\method{} & 8 & Adaptive & 1 shared & 10.75 & 7.57 \\
\bottomrule
\end{tabular}
\end{table*}

\subsection{Alignment Metrics}
\label{app:alignment_metrics}

Let $\Omega=\{\omega_1,\ldots,\omega_R\}$ be the discovered clusters and
$\mathcal Y=\{y_1,\ldots,y_K\}$ the reference pattern labels used only for evaluation.
For $n$ evaluated examples or prototypes, purity is the fraction assigned consistently with the dominant reference label in each cluster:
\begin{equation}
\operatorname{Purity}(\Omega,\mathcal Y)
=\frac{1}{n}\sum_{r=1}^{R}\max_k|\omega_r\cap y_k|.
\label{eq:purity_definition}
\end{equation}
Normalized mutual information measures the information shared by the two partitions:
\begin{equation}
\operatorname{NMI}(\Omega,\mathcal Y)
=\frac{2I(\Omega;\mathcal Y)}{H(\Omega)+H(\mathcal Y)}.
\label{eq:nmi_definition}
\end{equation}
For $n_{rk}=|\omega_r\cap y_k|$, $a_r=\sum_kn_{rk}$, and $b_k=\sum_rn_{rk}$, the adjusted Rand index is
\begin{equation}
\operatorname{ARI}
=
\frac{
\sum_{r,k}\binom{n_{rk}}{2}
-
\dfrac{\sum_r\binom{a_r}{2}\sum_k\binom{b_k}{2}}{\binom{n}{2}}
}{
\dfrac{1}{2}\left[
\sum_r\binom{a_r}{2}+\sum_k\binom{b_k}{2}
\right]
-
\dfrac{\sum_r\binom{a_r}{2}\sum_k\binom{b_k}{2}}{\binom{n}{2}}
}.
\label{eq:ari_definition}
\end{equation}
Purity emphasizes dominant-label correctness, NMI measures global information agreement, and ARI measures pairwise agreement after correcting for chance.

\section{Cost Accounting}
\label{app:cost_accounting}

Prototype discovery runs once.
For client $i$, it embeds $|\mathcal{D}_i|$ examples, clusters them, and performs $10C_i$ bucket warmup steps.
Federated training schedules $20\times10=200$ local steps per client, so warmup alone adds a $C_i/20$ step ratio (20\% for four buckets), excluding embedding and clustering.
For $P=\sum_i C_i$ uploaded prototypes, the server computes pairwise layer-wise cosine distances and silhouette-based agglomerative groups.

For rank $r$ over $L$ layers, query/value outputs $o_{l,q},o_{l,v}$ give signature dimension $d_B=r\sum_{l=1}^{L}(o_{l,q}+o_{l,v})$.
For Llama3.2-3B at rank 8, $d_B=917{,}504$ FP32 values, or 3.50 MiB per bucket; client $i$ sends $3.50C_i$ MiB once.
During each communication round, \method{} and FedLEASE use rank 8 per adaptively selected expert.
For the four-expert selections on Llama3.2-3B, both instantiate 9.18M LoRA parameters, matching the total LoRA capacity of rank-32 FedIT.
Including routing modules, the corresponding worst-case trainable-state payload contains 10.75M scalars for \method{} and 15.48M for FedLEASE; Table~\ref{tab:parameter_budget_supp} reports the backbone-specific totals for both models.
Inactive expert uploads are omitted; bytes depend on precision.
These counts expose deterministic overhead but do not replace end-to-end wall-clock and peak-memory profiling.

\section{Theoretical Analysis}
\label{app:theory}
This section analyzes the two mechanisms behind \method{}.
We first analyze gradient conflict and routing-weighted expert convergence with the router and other experts fixed.
We then fix the experts, compare persistent and reset-and-average router optimization, and transfer the bounds to sparse-inference risk.
Randomized sample-count-proportional bucket sampling defines the clean router reference, while an explicit order-bias term covers deterministic balanced interleaving.
Task identities are used only as observable proxies for latent patterns in the analysis and diagnostics; \method{} never observes them.

\subsection{Gradient Conflict under Pattern Mixtures}
\label{app:gradient_conflict}

Let $F_k(\theta)$ be the expected loss of latent pattern $k$ for the component under study and let
$g_k(\theta)=\nabla F_k(\theta)$.
If client $i$ contains pattern proportions $p_{i,k}$, then its mixed objective and gradient are
\begin{equation}
F_i(\theta)=\sum_{k=1}^{K}p_{i,k}F_k(\theta),
\qquad
g_i(\theta)=\sum_{k=1}^{K}p_{i,k}g_k(\theta).
\label{eq:mixed_client_gradient}
\end{equation}
The squared norm of the mixed direction expands as
\begin{equation}
\|g_i\|^2
=
\sum_kp_{i,k}^2\|g_k\|^2
+2\sum_{k<\ell}p_{i,k}p_{i,\ell}
\langle g_k,g_\ell\rangle.
\label{eq:mixed_gradient_cancellation}
\end{equation}
Thus a negative cross-pattern inner product cancels part of the usable descent direction before server aggregation.
Define the corresponding conflict measure
\begin{equation}
\Gamma_i(\theta)
=
\sum_{k<\ell}p_{i,k}p_{i,\ell}
\bigl[-\langle g_k(\theta),g_\ell(\theta)\rangle\bigr]_+.
\label{eq:client_gradient_conflict}
\end{equation}

The role of local purity can be stated without assuming that every pattern pair conflicts.
For bucket $c$ of client $i$, let $q_{i,c,k}$ be its pattern proportions and $k(i,c)=\arg\max_kq_{i,c,k}$.
If its off-dominant probability is at most $\epsilon$ and $\|g_k\|\leq G$, then its conflict measure satisfies
\begin{equation}
\Gamma_{i,c}^{\mathrm{bucket}}
:=\sum_{k<\ell}q_{i,c,k}q_{i,c,\ell}
\bigl[-\langle g_k,g_\ell\rangle\bigr]_+
\leq
\left(\epsilon+\frac{\epsilon^2}{2}\right)G^2.
\label{eq:pure_bucket_conflict}
\end{equation}
Indeed, pairs involving the dominant pattern have total probability weight at most $\epsilon$, pairs among off-dominant patterns have weight at most $\epsilon^2/2$, and each negative inner product is at most $G^2$ in magnitude.
Equation~\eqref{eq:pure_bucket_conflict} shows that coherent buckets bound worst-case within-update cancellation through their off-dominant probability.
The reported purity scores diagnose this condition but do not directly measure gradient inner products.

\subsection{Routing-Weighted Expert Contamination}
\label{app:expert_contamination}

There are $K$ latent patterns and $M^\star$ global experts.
For the clean comparison, assume $K=M^\star$ and a bijection $\psi:\{1,\ldots,K\}\to\{1,\ldots,M^\star\}$, and write $\kappa_m=\psi^{-1}(m)$.
When $K\neq M^\star$, the fidelity bound still applies to experts with a designated target pattern after unmatched or merged pattern weight is included in the contamination term, but the one-to-one oracle-routing statements below require the clean regime.

For expert $m$, let $\rho_m(\theta_m)>0$ denote its total routed contribution weight and decompose the raw expected soft-routed gradient by client $i$, local bucket $c$, and latent pattern $k$:
\begin{equation}
\begin{aligned}
\smash{h_m^{\mathrm{raw}}(\theta_m)}
&=\rho_m(\theta_m)\bar h_m(\theta_m),\\
\bar h_m(\theta_m)
&=\sum_{i,c,k}\omega_{i,c,k,m}(\theta_m)
g_{i,c,k,m}(\theta_m),\\
\omega_{i,c,k,m}&\geq0,
\qquad \sum_{i,c,k}\omega_{i,c,k,m}=1.
\end{aligned}
\label{eq:routed_gradient_decomposition}
\end{equation}
We condition on positive total routed weight when defining $\bar h_m$; an inactive expert has zero raw update.
Here $\omega_{i,c,k,m}$ is the normalized nonnegative contribution weight after accounting for bucket sampling, prototype-conditioned upload weighting, and input-dependent routing gates.
The conditional direction $g_{i,c,k,m}$ retains the corresponding loss sensitivity, so Eq.~\eqref{eq:routed_gradient_decomposition} does not factor a mean routing coefficient from an unweighted bucket gradient.

\begin{assumption}[Bounded routed gradients and within-pattern shift]
\label{ass:routed-gradients}
For every $i,c,k,m$ and iterate,
\begin{equation}
\begin{aligned}
\|g_{i,c,k,m}(\theta_m)\|&\leq B_g,
&\|g_{k,m}(\theta_m)\|&\leq B_g,\\
\|g_{i,c,k,m}(\theta_m)-g_{k,m}(\theta_m)\|
&\leq\delta_{\mathrm{pat}}.
\end{aligned}
\label{eq:within_pattern_shift}
\end{equation}
where $g_{k,m}=\nabla F_{k,m}$ is the reference gradient for pattern $k$ and expert $m$.
\end{assumption}

Define the effective routed contamination of expert $m$ as
\begin{equation}
\chi_m(\theta_m)
=\sum_{i,c,k\neq\kappa_m}\omega_{i,c,k,m}(\theta_m).
\label{eq:routed_contamination}
\end{equation}
Let $k(i,c)$ be the dominant pattern of client $i$'s bucket $c$, and let $a(i,c)$ be its aligned anchor expert.
Using the same routed contribution weights, define
\begin{equation}
\begin{aligned}
\epsilon_m^{\mathrm{loc}}
&=\sum_{i,c,k}\omega_{i,c,k,m}\mathbf 1\{k\neq k(i,c)\},\\
\epsilon_m^{\mathrm{align}}
&=\sum_{i,c,k}\omega_{i,c,k,m}
\mathbf 1\{k(i,c)\neq\kappa_{a(i,c)}\},\\
\epsilon_m^{\mathrm{leak}}
&=\sum_{i,c,k}\omega_{i,c,k,m}\mathbf 1\{a(i,c)\neq m\}.
\end{aligned}
\label{eq:contamination_components}
\end{equation}

\begin{lemma}[Contamination decomposition]
\label{lem:contamination_decomposition}
In the clean one-pattern-per-expert regime,
\begin{equation}
\chi_m
\leq
\epsilon_m^{\mathrm{loc}}
+\epsilon_m^{\mathrm{align}}
+\epsilon_m^{\mathrm{leak}}.
\label{eq:contamination_decomposition}
\end{equation}
\end{lemma}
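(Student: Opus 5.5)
The bound is a pointwise inclusion of indicator events, summed against the nonnegative weights $\omega_{i,c,k,m}$. All four quantities $\chi_m$, $\epsilon_m^{\mathrm{loc}}$, $\epsilon_m^{\mathrm{align}}$, $\epsilon_m^{\mathrm{leak}}$ are sums of the same weights over the same index set $(i,c,k)$, each restricted by an indicator. It therefore suffices to prove, for every triple $(i,c,k)$,
\begin{equation*}
\mathbf 1\{k\neq\kappa_m\}\leq\mathbf 1\{k\neq k(i,c)\}+\mathbf 1\{k(i,c)\neq\kappa_{a(i,c)}\}+\mathbf 1\{a(i,c)\neq m\}.
\end{equation*}
Multiplying by $\omega_{i,c,k,m}\geq0$ and summing over $(i,c,k)$ then gives Eq.~\eqref{eq:contamination_decomposition}, because the left-hand sum is exactly $\chi_m$ in Eq.~\eqref{eq:routed_contamination}.

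I will prove the indicator inequality by contraposition. Suppose all three right-hand indicators vanish, so $k=k(i,c)$, $k(i,c)=\kappa_{a(i,c)}$, and $a(i,c)=m$. Chaining these gives $k=k(i,c)=\kappa_{a(i,c)}=\kappa_m$, so the left-hand indicator vanishes too. If any right-hand indicator equals $1$, the right side is at least $1$, which is at least the left side since indicators are bounded by $1$. The inequality therefore holds in every case.

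The clean one-pattern-per-expert regime is used only to make $\kappa_{a(i,c)}$ and $\kappa_m$ well-defined single patterns through the bijection $\psi$, so the chain of equalities is meaningful. The dominant pattern $k(i,c)$ needs to be a fixed choice per bucket, with ties broken arbitrarily. The argument never uses $\sum\omega=1$, the dependence of $\omega$ on $\theta_m$, or the routing mechanism, so the bound holds at every iterate. I do not expect a real obstacle; the only point needing care is that the three error events are defined with the same weights $\omega_{i,c,k,m}$ and indexed by the bucket's anchor $a(i,c)$ rather than by $m$, which is exactly what the chain needs.
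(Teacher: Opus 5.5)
Your proposal is correct and matches the paper's proof: both establish the pointwise indicator bound $\mathbf 1\{k\neq\kappa_m\}\leq\mathbf 1\{k\neq k(i,c)\}+\mathbf 1\{k(i,c)\neq\kappa_{a(i,c)}\}+\mathbf 1\{a(i,c)\neq m\}$ by contraposition (if all three events fail, then $k=k(i,c)=\kappa_{a(i,c)}=\kappa_m$), and then sum it against the nonnegative weights $\omega_{i,c,k,m}$. Your side remarks are also accurate: normalization of $\omega$ is never used, and the clean regime is needed only so that $\kappa_m$ and $\kappa_{a(i,c)}$ are well-defined single patterns.
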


\begin{proof}
If $k\neq\kappa_m$, then at least one of the following must hold:
$k\neq k(i,c)$, $k(i,c)\neq\kappa_{a(i,c)}$, or $a(i,c)\neq m$.
Otherwise $k=k(i,c)=\kappa_{a(i,c)}$ and $a(i,c)=m$, which implies $k=\kappa_m$.
Applying this indicator union bound inside Eq.~\eqref{eq:routed_contamination} proves the result.
\end{proof}

The local and global purity values in the main text diagnose the first two terms, while routing concentration qualitatively diagnoses the third.
Equation~\eqref{eq:contamination_decomposition} connects these diagnostics to $\chi_m$; numerical equality would require additional assumptions.

\begin{proposition}[Routing-weighted expert fidelity]
\label{prop:routing-fidelity}
Under Assumption~\ref{ass:routed-gradients},
\begin{equation}
\|\bar h_m(\theta_m)-g_{\kappa_m,m}(\theta_m)\|
\leq
2B_g\chi_m(\theta_m)+\delta_{\mathrm{pat}}.
\label{eq:routed_expert_bias}
\end{equation}
\end{proposition}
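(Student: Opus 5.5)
The plan is to write the target gradient as a convex combination using the same weights, and then split the error into an on-pattern part and an off-pattern part. The weights $\omega_{i,c,k,m}$ are nonnegative and sum to one by Eq.~\eqref{eq:routed_gradient_decomposition}, so
\begin{equation*}
\bar h_m-g_{\kappa_m,m}
=\sum_{i,c,k}\omega_{i,c,k,m}\bigl(g_{i,c,k,m}-g_{\kappa_m,m}\bigr),
\end{equation*}
where all quantities are evaluated at $\theta_m$. We then partition the index set into terms with $k=\kappa_m$ and terms with $k\neq\kappa_m$ and apply the triangle inequality to each group.

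For the on-pattern terms ($k=\kappa_m$), Assumption~\ref{ass:routed-gradients} gives $\|g_{i,c,\kappa_m,m}-g_{\kappa_m,m}\|\leq\delta_{\mathrm{pat}}$. Their total weight is $1-\chi_m$, so this group contributes at most $(1-\chi_m)\delta_{\mathrm{pat}}$.

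For the off-pattern terms ($k\neq\kappa_m$), the same assumption gives $\|g_{i,c,k,m}\|\leq B_g$ and $\|g_{\kappa_m,m}\|\leq B_g$, so each difference is bounded by $2B_g$. Their total weight is exactly $\chi_m$ by Eq.~\eqref{eq:routed_contamination}, so this group contributes at most $2B_g\chi_m$.

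Adding the two groups yields $2B_g\chi_m+(1-\chi_m)\delta_{\mathrm{pat}}\leq 2B_g\chi_m+\delta_{\mathrm{pat}}$, since $0\leq\chi_m\leq1$. This is Eq.~\eqref{eq:routed_expert_bias}.

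The only delicate point is bookkeeping, not analysis. The argument needs $\bar h_m$ to be a genuine convex combination: the weights must be normalized after routing gates and upload weighting, and we must condition on $\rho_m>0$, as the paper specifies. It also needs the reference gradient $g_{\kappa_m,m}$ to be independent of $(i,c,k)$, so that it can be pulled inside the weighted sum. One could tighten the off-pattern term by instead routing it through $g_{k,m}$, using $\delta_{\mathrm{pat}}$ plus the cross-pattern gap, but the crude $2B_g$ bound suffices for the stated result.
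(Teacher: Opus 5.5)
Your proof is correct, and it differs from the paper's only in where the intermediate reference gradient is inserted. The paper adds and subtracts $g_{k,m}$ in every term, writing $\bar h_m-g_{\kappa_m,m}=\sum_{i,c,k}\omega_{i,c,k,m}(g_{i,c,k,m}-g_{k,m})+\sum_{i,c,k}\omega_{i,c,k,m}(g_{k,m}-g_{\kappa_m,m})$. The first sum is at most $\delta_{\mathrm{pat}}$, paid on the full unit mass. The second sum vanishes for $k=\kappa_m$ and is at most $2B_g$ per off-target term, using $\|g_{k,m}\|\leq B_g$ and $\|g_{\kappa_m,m}\|\leq B_g$.

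You instead compare off-target routed gradients directly with $g_{\kappa_m,m}$. This uses the routed-gradient bound $\|g_{i,c,k,m}\|\leq B_g$, charges $\delta_{\mathrm{pat}}$ only on the on-target mass $1-\chi_m$, and yields the slightly sharper $2B_g\chi_m+(1-\chi_m)\delta_{\mathrm{pat}}$.

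The paper's route has two advantages. It needs only the reference-gradient and within-pattern-shift parts of Assumption~\ref{ass:routed-gradients}, not the norm bound on $g_{i,c,k,m}$. It also isolates the cross-pattern gap $\|g_{k,m}-g_{\kappa_m,m}\|$, which is the natural quantity to refine if patterns are known to be close.

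Your closing remark about tightening via $g_{k,m}$ is accurate only when that true gap is below $2B_g-\delta_{\mathrm{pat}}$. Bounding it crudely by $2B_g$ instead gives exactly the paper's constant, which is looser than yours by $\chi_m\delta_{\mathrm{pat}}$.
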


\begin{proof}
Insert $g_{k,m}$ into Eq.~\eqref{eq:routed_gradient_decomposition}.
The weighted within-pattern shift is at most $\delta_{\mathrm{pat}}$.
For $k=\kappa_m$, the remaining difference is zero; for $k\neq\kappa_m$, boundedness gives
$\|g_{k,m}-g_{\kappa_m,m}\|\leq2B_g$.
Weighting the off-target terms proves Eq.~\eqref{eq:routed_expert_bias}.
\end{proof}

Combining Proposition~\ref{prop:routing-fidelity} with Lemma~\ref{lem:contamination_decomposition} yields the interpretable bound
\begin{equation}
\|\bar h_m-g_{\kappa_m,m}\|
\leq
2B_g\bigl(
\epsilon_m^{\mathrm{loc}}
+\epsilon_m^{\mathrm{align}}
+\epsilon_m^{\mathrm{leak}}
\bigr)
+\delta_{\mathrm{pat}}.
\label{eq:decomposed_expert_bias}
\end{equation}

For a client-level expert unit, define the analogous off-target contribution weight $\alpha_m^{\mathrm{client}}$ from its routing-weighted update decomposition.

\begin{corollary}[Direct granularity comparison]
\label{cor:direct-granularity}
The client-level direction obeys
\begin{equation}
\|\bar h_m^{\mathrm{client}}-g_{\kappa_m,m}\|
\leq2B_g\alpha_m^{\mathrm{client}}+\delta_{\mathrm{pat}}.
\label{eq:client_expert_bias}
\end{equation}
Consequently, prototype conditioning gives a strictly tighter worst-case target-direction bound whenever
$\chi_m<\alpha_m^{\mathrm{client}}$.
\end{corollary}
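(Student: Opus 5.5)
The bound in Eq.~\eqref{eq:client_expert_bias} is a direct reuse of the argument for Proposition~\ref{prop:routing-fidelity}, applied to the client-level expert unit instead of the prototype-conditioned one. Concretely, I would write the client-level expected direction in the same form as Eq.~\eqref{eq:routed_gradient_decomposition}:
\[
\bar h_m^{\mathrm{client}}=\sum_{i,k}\omega^{\mathrm{client}}_{i,k,m}\,g_{i,k,m},
\qquad
\omega^{\mathrm{client}}_{i,k,m}\geq0,
\qquad
\sum_{i,k}\omega^{\mathrm{client}}_{i,k,m}=1 .
\]
Here the index $c$ is absent, since no local buckets exist (equivalently, each client is one bucket). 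By definition, $\alpha_m^{\mathrm{client}}=\sum_{i,k\neq\kappa_m}\omega^{\mathrm{client}}_{i,k,m}$. I assume Assumption~\ref{ass:routed-gradients} holds for these client-level conditional directions with the same constants $B_g$ and $\delta_{\mathrm{pat}}$. This is the one modelling point to state explicitly, since the assumption was phrased over $(i,c,k,m)$.

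Next, I subtract $g_{\kappa_m,m}=\sum_{i,k}\omega^{\mathrm{client}}_{i,k,m}g_{\kappa_m,m}$, using that the weights sum to one, and insert $\pm g_{k,m}$ in each term. This splits the difference into two parts:
\[
\sum_{i,k}\omega^{\mathrm{client}}_{i,k,m}\,(g_{i,k,m}-g_{k,m})
\;+\;
\sum_{i,k}\omega^{\mathrm{client}}_{i,k,m}\,(g_{k,m}-g_{\kappa_m,m}).
\]
By the triangle inequality, the first sum has norm at most $\delta_{\mathrm{pat}}$. In the second sum, terms with $k=\kappa_m$ vanish. Each remaining term satisfies $\|g_{k,m}-g_{\kappa_m,m}\|\leq 2B_g$, so the second sum is at most $2B_g\alpha_m^{\mathrm{client}}$. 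Together these give Eq.~\eqref{eq:client_expert_bias}.

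For the comparison, I place the two worst-case bounds side by side. Both have the form $2B_g x+\delta_{\mathrm{pat}}$, with $x=\chi_m$ for the prototype-conditioned unit by Proposition~\ref{prop:routing-fidelity} and $x=\alpha_m^{\mathrm{client}}$ for the client-level unit. This map is strictly increasing in $x$ because $B_g>0$; if $B_g=0$ both bounds collapse and I would note that as a degenerate case. Hence $\chi_m<\alpha_m^{\mathrm{client}}$ implies the prototype bound is strictly smaller, evaluated at the same $\theta_m$ and with the same constants.

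I expect no real obstacle. The only subtlety is interpretive: the claim is about the worst-case bounds, not the actual errors, and it requires the same $B_g$ and $\delta_{\mathrm{pat}}$ to govern both decompositions. I would state both conditions explicitly.
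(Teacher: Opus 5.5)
Your proposal is correct and matches the paper's (implicit) argument: the paper gives no separate proof, and the bound is exactly Proposition~\ref{prop:routing-fidelity} re-run on the client-level convex decomposition, with strictness following from monotonicity of $x\mapsto 2B_g x+\delta_{\mathrm{pat}}$. The caveats you flag are precisely the conditions the paper leaves unstated: Assumption~\ref{ass:routed-gradients} must hold for the client-level directions with the same $B_g$ and $\delta_{\mathrm{pat}}$, and $B_g>0$ is needed for strictness.
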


\begin{assumption}[Smooth target objective and routed stochastic direction]
\label{ass:expert-smoothness}
For expert $m$, $F_{\kappa_m,m}$ is $L_E$-smooth and lower bounded by $F_{\kappa_m,m}^\star$.
At active expert-update step $u$, let $\mathcal F_u$ contain $\theta_m^u$, the total routed weight $\rho_m^u>0$, and the routing history before stochastic gradient noise is drawn.
The raw stochastic direction satisfies
\begin{equation}
\begin{aligned}
\mathbb E[d_m^u\mid\mathcal F_u]
&=\rho_m^u\bigl(g_{\kappa_m,m}(\theta_m^u)+b_m^u\bigr),\\
\mathbb E[\|d_m^u-\mathbb E[d_m^u\mid\mathcal F_u]\|^2\mid\mathcal F_u]
&\leq(\rho_m^u)^2\sigma_E^2,
\end{aligned}
\end{equation}
where Proposition~\ref{prop:routing-fidelity} gives
$\|b_m^u\|\leq2B_g\chi_m(\theta_m^u)+\delta_{\mathrm{pat}}$.
Inactive steps have effective step size zero.
\end{assumption}

\begin{theorem}[Per-expert convergence in stationarity under routed contamination]
\label{thm:expert-stationarity-appendix}
Let $\theta_m^{u+1}=\theta_m^u-\eta_Ed_m^u$,
$a_m^u=\eta_E\rho_m^u\leq1/L_E$, and
$A_{m,U}=\sum_{u<U}\mathbb E[a_m^u]>0$.
Define $\Delta_{E,m}^0=F_{\kappa_m,m}(\theta_m^0)-F_{\kappa_m,m}^\star$.
Write $g_m^u=g_{\kappa_m,m}(\theta_m^u)$ and
$\varepsilon_m^u=2B_g\chi_m(\theta_m^u)+\delta_{\mathrm{pat}}$.
Then
\begin{equation}
\begin{aligned}
\frac{1}{A_{m,U}}\sum_{u<U}
\mathbb E\!\left[
a_m^u\|g_m^u\|^2
\right]
&\leq
\frac{2\Delta_{E,m}^0}{A_{m,U}}\\
&\quad+\frac{1}{A_{m,U}}\sum_{u<U}
\mathbb E\!\left[
a_m^u(\varepsilon_m^u)^2
\right]\\
&\quad+\frac{L_E\sigma_E^2}{A_{m,U}}
\sum_{u<U}\mathbb E[(a_m^u)^2].
\label{eq:routed_stationarity}
\end{aligned}
\end{equation}
\end{theorem}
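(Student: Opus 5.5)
The argument is a biased-SGD descent lemma with a random effective step size $a_m^u=\eta_E\rho_m^u$, summed and telescoped.

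\textbf{Step 1: descent inequality.} Fix an active step $u$ and write $F=F_{\kappa_m,m}$. Since the update is $\theta_m^{u+1}-\theta_m^u=-\eta_E d_m^u$, $L_E$-smoothness from Assumption~\ref{ass:expert-smoothness} gives
\[
F(\theta_m^{u+1})\leq F(\theta_m^u)-\eta_E\langle g_m^u,d_m^u\rangle+\tfrac{L_E\eta_E^2}{2}\|d_m^u\|^2 .
\]

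\textbf{Step 2: conditional expectation.} Condition on $\mathcal F_u$, which fixes $\theta_m^u$ and $\rho_m^u$, hence also $a_m^u$ and $\chi_m(\theta_m^u)$. Use the conditional mean $\rho_m^u(g_m^u+b_m^u)$ and split the second moment into squared mean plus variance. Multiplying through by $\eta_E$ then gives
\[
\mathbb E[F(\theta_m^{u+1})\mid\mathcal F_u]\leq F(\theta_m^u)-a_m^u\langle g_m^u,g_m^u+b_m^u\rangle+\tfrac{L_E}{2}(a_m^u)^2\|g_m^u+b_m^u\|^2+\tfrac{L_E}{2}(a_m^u)^2\sigma_E^2 .
\]

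\textbf{Step 3: absorb the bias with $a_m^u\leq 1/L_E$.} This is the main point needing care. Because $L_E a_m^u\leq 1$, we have $\tfrac{L_E}{2}(a_m^u)^2\|g+b\|^2\leq \tfrac{a_m^u}{2}\|g+b\|^2$. Then
\[
-\langle g,g+b\rangle+\tfrac12\|g+b\|^2=-\tfrac12\|g\|^2+\tfrac12\|b\|^2,
\]
which is the standard polarization trick. Combining this with Proposition~\ref{prop:routing-fidelity}, $\|b_m^u\|\leq\varepsilon_m^u$, yields
\[
\mathbb E[F(\theta_m^{u+1})\mid\mathcal F_u]\leq F(\theta_m^u)-\tfrac{a_m^u}{2}\|g_m^u\|^2+\tfrac{a_m^u}{2}(\varepsilon_m^u)^2+\tfrac{L_E\sigma_E^2}{2}(a_m^u)^2 .
\]
Inactive steps have $a_m^u=0$ and $\theta_m^{u+1}=\theta_m^u$, so the inequality holds trivially there.

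\textbf{Step 4: sum and telescope.} Take total expectations by the tower property, which is valid because $a_m^u$ is $\mathcal F_u$-measurable. Sum over $u<U$ and telescope, using $F(\theta_m^U)\geq F^\star_{\kappa_m,m}$ so that the left side is bounded by $\Delta_{E,m}^0$. Multiply by $2$ and divide by $A_{m,U}>0$. This gives exactly Eq.~\eqref{eq:routed_stationarity}, including the noise coefficient $L_E\sigma_E^2$ once the factor $2$ is applied.

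The remaining subtlety is measurability. $\rho_m^u$ and $\chi_m$ depend on the routing history, so both must be fixed by $\mathcal F_u$ before the noise is drawn. The assumption grants exactly this, so no independence between the step size and the gradient noise is needed.
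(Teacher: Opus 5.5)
Your proof is correct and follows the paper's argument essentially step for step. You use $L_E$-smoothness with the conditional mean/variance split, absorb the squared-mean term via $a_m^u\le 1/L_E$ and the identity $-\langle g,g+b\rangle+\tfrac12\|g+b\|^2=-\tfrac12\|g\|^2+\tfrac12\|b\|^2$, then bound $\|b_m^u\|\le\varepsilon_m^u$, take total expectations, telescope against $F_{\kappa_m,m}^\star$, multiply by $2$ and divide by $A_{m,U}$; the paper applies the identity first and then drops the nonpositive $\|\mu^u\|^2$ coefficient, which is the same algebra in a different order.
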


\begin{proof}
Let $g^u=g_{\kappa_m,m}(\theta_m^u)$, $\mu^u=g^u+b_m^u$, and $a_u=a_m^u$.
Smoothness and the conditional variance bound give
\begin{equation}
\begin{aligned}
\mathbb E[F_{\kappa_m,m}(\theta_m^{u+1})\mid\mathcal F_u]
&\leq F_{\kappa_m,m}(\theta_m^u)
-a_u\langle g^u,\mu^u\rangle\\
&\quad+\frac{L_Ea_u^2}{2}
(\|\mu^u\|^2+\sigma_E^2).
\end{aligned}
\end{equation}
Using
$-\langle g^u,\mu^u\rangle
=-\frac12\|g^u\|^2-\frac12\|\mu^u\|^2+\frac12\|b_m^u\|^2$
and $a_u\leq1/L_E$ yields one-step descent with residual
$a_u\|b_m^u\|^2/2+L_Ea_u^2\sigma_E^2/2$.
Taking total expectation, telescoping, and applying Proposition~\ref{prop:routing-fidelity} proves Eq.~\eqref{eq:routed_stationarity}.
\end{proof}

\begin{corollary}[Matched-effective-weight convergence comparison]
\label{cor:matched-expert-convergence}
Consider prototype- and client-level expert updates with the same initialization, effective weights $\{a_m^u\}_{u<U}$, smoothness, and noise terms.
Let their nonnegative directional-error bounds be
$\varepsilon_{m,\mathrm{proto}}^u=2B_g\chi_m^u+\delta_{\mathrm{pat}}$ and
$\varepsilon_{m,\mathrm{client}}^u=2B_g\alpha_m^{\mathrm{client},u}+\delta_{\mathrm{pat}}$.
If $\chi_m^u\leq\alpha_m^{\mathrm{client},u}$ at every active step, the prototype-conditioned right-hand side of Eq.~\eqref{eq:routed_stationarity} is no larger, and is strictly smaller whenever the inequality is strict with positive probability at an active step.
\end{corollary}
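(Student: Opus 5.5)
The statement to prove is Corollary~\ref{cor:matched-expert-convergence}. The plan is to read it off Theorem~\ref{thm:expert-stationarity-appendix} by comparing the two right-hand sides of Eq.~\eqref{eq:routed_stationarity} term by term. Under the matched hypotheses the two bounds share several pieces: the same initialization gives the same $\Delta_{E,m}^0$, and the same effective weights $\{a_m^u\}$ give the same $A_{m,U}>0$ and the same noise term $L_E\sigma_E^2\sum_u\mathbb E[(a_m^u)^2]/A_{m,U}$. So the only term that differs is the contamination term $A_{m,U}^{-1}\sum_{u<U}\mathbb E[a_m^u(\varepsilon_m^u)^2]$. For the client-level update I would apply the same theorem with $b_m^u$ bounded by Corollary~\ref{cor:direct-granularity}; the proof of the theorem only uses $\|b_m^u\|\le\varepsilon^u$, so it carries over verbatim.

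\textbf{Weak inequality.} Since $B_g\ge0$ and $\delta_{\mathrm{pat}}\ge0$, the map $x\mapsto(2B_gx+\delta_{\mathrm{pat}})^2$ is nondecreasing on $x\ge0$. Hence $\chi_m^u\le\alpha_m^{\mathrm{client},u}$ at every active step gives $(\varepsilon^u_{m,\mathrm{proto}})^2\le(\varepsilon^u_{m,\mathrm{client}})^2$ pointwise. At inactive steps $a_m^u=0$, so those terms vanish in both bounds. Multiplying by $a_m^u\ge0$, taking expectations and summing gives the "no larger" claim.

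\textbf{Strict inequality.} This is the only delicate step. I need the gap $a_m^u[(\varepsilon^u_{\mathrm{client}})^2-(\varepsilon^u_{\mathrm{proto}})^2]$ to have strictly positive expectation at some $u$. For that I will:
\begin{itemize}
\item read "active step" as $a_m^u>0$;
\item require $B_g>0$, which I would state explicitly. If $B_g=0$, then both $\varepsilon$'s equal $\delta_{\mathrm{pat}}$ and strictness fails, so this is a genuine added condition.
\end{itemize}
Under these, strict $\chi<\alpha$ gives strict $(\varepsilon_{\mathrm{proto}})^2<(\varepsilon_{\mathrm{client}})^2$, using strict monotonicity of the square on the nonnegatives. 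The integrand is then strictly positive on an event of positive probability and nonnegative elsewhere, so its expectation is strictly positive. Dividing by the finite positive $A_{m,U}$ preserves strictness.

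The main obstacle is this measure-theoretic strictness step, together with checking that "active" is defined with respect to the shared weights $\{a_m^u\}$ for both procedures. Both are guaranteed by the matched-effective-weight hypothesis.
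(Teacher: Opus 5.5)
Your proposal is correct and follows the paper's own argument: with $\Delta_{E,m}^0$, $A_{m,U}$, and the noise term matched, only the contamination sum differs, and the claim follows from $(\varepsilon^u_{m,\mathrm{proto}})^2\le(\varepsilon^u_{m,\mathrm{client}})^2$ weighted by $a_m^u\ge 0$. Your treatment of the strict case is a legitimate refinement that the paper's one-line proof leaves implicit: you read ``active'' as $a_m^u>0$, and you require $B_g>0$, since otherwise both directional-error bounds reduce to $\delta_{\mathrm{pat}}$ and strictness fails.
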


\begin{proof}
Under the matched quantities, only the directional-error term differs; the claim follows from
$(\varepsilon_{m,\mathrm{proto}}^u)^2\leq(\varepsilon_{m,\mathrm{client}}^u)^2$.
\end{proof}

Equation~\eqref{eq:routed_stationarity} is a fixed-context expert convergence guarantee.
It separates two failure modes of the actual raw update: $\chi_m$ controls directional contamination, while $A_{m,U}$ records the effective optimization budget received by expert $m$.
If an expert receives negligible routed weight, the first term remains large even when its normalized direction is pure.

\subsection{Persistent and Reset-and-Average Router Optimization}
\label{app:router_persistence}

Freeze the experts and write $\varphi$ for the router parameters.
For client $i$, let $\pi_{i,c}=|\mathcal B_{i,c}|/|\mathcal D_i|$ be bucket $c$'s sample fraction and let
\begin{equation}
\begin{aligned}
R_i(\varphi)&=\sum_c\pi_{i,c}R_{i,c}(\varphi),\\
g_i(\varphi)&=\nabla R_i(\varphi),
&g_{i,c}(\varphi)&=\nabla R_{i,c}(\varphi).
\end{aligned}
\label{eq:router_client_objective}
\end{equation}

The persistent router performs SGD on one shared state.
Randomized sample-count-proportional sampling draws $c_s$ according to $\Pr(c_s=c)=\pi_{i,c}$ and is conditionally unbiased.
For the implemented deterministic sample-count-balanced schedule, define its conditional order bias
\begin{equation}
q_s(\varphi_s)
=\mathbb E[\widehat g_{i,c_s}(\varphi_s)\mid\varphi_s]-g_i(\varphi_s).
\label{eq:persistent_order_bias}
\end{equation}
Thus $q_s=0$ for the randomized reference, while deterministic interleaving is covered through an explicit residual.
The reset alternative instead maintains bucket-local states, evaluates subsequent gradients at those dispersed states, and averages them after the local block.
The gradient-dominance condition below is a standard route to linear SGD convergence, while the resulting dispersion term is analogous to client drift in local federated optimization; the main text provides the relevant citations.

\begin{assumption}[Router smoothness, gradient dominance, and noise]
\label{ass:router-optimization}
Each $R_{i,c}$ is $L_R$-smooth, and $R_i$ satisfies the $\mu$-Polyak--\L{}ojasiewicz condition
\begin{equation}
\|\nabla R_i(\varphi)\|^2
\geq2\mu(R_i(\varphi)-R_i^\star).
\label{eq:router_pl}
\end{equation}
Persistent gradients have conditional mean $g_i(\varphi_s)+q_s(\varphi_s)$, and their centered conditional variance is at most $\sigma_R^2$.
For reset updates, the weighted aggregate of bucket-gradient noise has conditional second moment at most $\sigma_R^2$.
\end{assumption}

\begin{theorem}[Persistent shared-state router convergence]
\label{thm:persistent-router-appendix}
For $\varphi_{s+1}=\varphi_s-\eta_R\widehat g_{i,c_s}(\varphi_s)$ with $0<\eta_R\leq1/L_R$,
\begin{equation}
\begin{aligned}
\mathbb E[R_i(\varphi_S)-R_i^\star]
&\leq \mathcal E_{i,S}+\mathcal O_{i,S},\\
\mathcal E_{i,S}
&:=(1-\mu\eta_R)^S(R_i(\varphi_0)-R_i^\star)
+\frac{L_R\eta_R\sigma_R^2}{2\mu},\\
\mathcal O_{i,S}
&:=\frac{\eta_R}{2}\sum_{s=0}^{S-1}
(1-\mu\eta_R)^{S-1-s}
\mathbb E\|q_s(\varphi_s)\|^2.
\end{aligned}
\label{eq:persistent_router_convergence}
\end{equation}
\end{theorem}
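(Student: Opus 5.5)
The plan is the standard one-step descent argument for biased SGD under smoothness and PL, with the order bias $q_s$ handled as a perturbation of the mean direction. Fix client $i$ and write $R=R_i$, $g=g_i$. Let $\mathcal G_s$ be the conditioning $\sigma$-field containing $\varphi_s$ and the schedule history. By Assumption~\ref{ass:router-optimization}, $\widehat g_{i,c_s}(\varphi_s)=g(\varphi_s)+q_s(\varphi_s)+\xi_s$ with $\mathbb E[\xi_s\mid\mathcal G_s]=0$ and $\mathbb E[\|\xi_s\|^2\mid\mathcal G_s]\le\sigma_R^2$.

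First, smoothness of $R$: since each $R_{i,c}$ is $L_R$-smooth and $R$ is a convex combination with weights $\pi_{i,c}$, $R$ is $L_R$-smooth. The descent lemma applied to $\varphi_{s+1}=\varphi_s-\eta_R\widehat g_{i,c_s}$ then gives
\[
\mathbb E[R(\varphi_{s+1})\mid\mathcal G_s]\le R(\varphi_s)-\eta_R\langle g,g+q_s\rangle+\tfrac{L_R\eta_R^2}{2}\bigl(\|g+q_s\|^2+\sigma_R^2\bigr).
\]
Next, use the polarization identity $-\langle g,g+q_s\rangle=-\tfrac12\|g\|^2-\tfrac12\|g+q_s\|^2+\tfrac12\|q_s\|^2$, mirroring the proof of Theorem~\ref{thm:expert-stationarity-appendix}. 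Because $\eta_R\le 1/L_R$, the term $-\tfrac{\eta_R}{2}\|g+q_s\|^2$ absorbs $\tfrac{L_R\eta_R^2}{2}\|g+q_s\|^2$. This leaves
\[
\mathbb E[R(\varphi_{s+1})\mid\mathcal G_s]\le R(\varphi_s)-\tfrac{\eta_R}{2}\|g(\varphi_s)\|^2+\tfrac{\eta_R}{2}\|q_s\|^2+\tfrac{L_R\eta_R^2\sigma_R^2}{2}.
\]

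The PL inequality \eqref{eq:router_pl} gives $-\tfrac{\eta_R}{2}\|g\|^2\le-\mu\eta_R(R(\varphi_s)-R^\star)$. Subtracting $R^\star$ and taking total expectations yields the recursion
\[
\delta_{s+1}\le(1-\mu\eta_R)\delta_s+\tfrac{\eta_R}{2}\mathbb E\|q_s\|^2+\tfrac{L_R\eta_R^2\sigma_R^2}{2},\qquad \delta_s:=\mathbb E[R(\varphi_s)-R^\star].
\]
We have $0\le 1-\mu\eta_R<1$, since $\mu\le L_R$ for a smooth PL function and hence $\mu\eta_R\le1$. Unrolling over $S$ steps gives three pieces:
\begin{itemize}
\item the contraction term $(1-\mu\eta_R)^S\delta_0$;
\item the weighted order-bias sum, which is exactly $\mathcal O_{i,S}$;
\item the noise sum $\tfrac{L_R\eta_R^2\sigma_R^2}{2}\sum_{s<S}(1-\mu\eta_R)^{S-1-s}$, which is at most $\tfrac{L_R\eta_R^2\sigma_R^2}{2}\cdot\tfrac{1}{\mu\eta_R}=\tfrac{L_R\eta_R\sigma_R^2}{2\mu}$.
\end{itemize}
The first and third pieces together form $\mathcal E_{i,S}$, so $\delta_S\le\mathcal E_{i,S}+\mathcal O_{i,S}$, as claimed.

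The main point requiring care is the treatment of the bias and noise. With a deterministic schedule, $q_s$ can be correlated with $\varphi_s$, so it must enter through $\mathbb E\|q_s\|^2$ evaluated inside the total expectation, not as a constant; the conditional form of the assumption is what permits this. The noise cross term vanishes only because the variance is centered around the conditional mean $g+q_s$, not around $g$, and the expansion above is written accordingly. I would also note that the polarization route costs a factor of $\tfrac12$ on the bias term. This is exactly the constant in $\mathcal O_{i,S}$, so no Young's-inequality slack is needed.
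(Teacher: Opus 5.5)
Your proposal is correct and follows the paper's proof essentially step for step. Both apply the descent lemma with the centered variance bound, use the polarization identity $-\langle g_i,g_i+q_s\rangle=-\tfrac12\|g_i\|^2-\tfrac12\|g_i+q_s\|^2+\tfrac12\|q_s\|^2$, drop the $\|g_i+q_s\|^2$ term via $\eta_R\le 1/L_R$, and then apply PL and unroll. You also make explicit three details the paper leaves implicit: that $R_i$ inherits $L_R$-smoothness from the bucket losses, that $\mu\le L_R$ guarantees $0\le 1-\mu\eta_R<1$ for the unrolling, and the geometric-sum bound that produces $L_R\eta_R\sigma_R^2/(2\mu)$.
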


\begin{proof}
Let $\mu_s=g_i(\varphi_s)+q_s(\varphi_s)$.
Smoothness, the centered variance bound, and
$-\langle g_i,\mu_s\rangle
=-\frac12\|g_i\|^2-\frac12\|\mu_s\|^2+\frac12\|q_s\|^2$
imply
\begin{equation}
\begin{aligned}
\mathbb E[R_i(\varphi_{s+1})\mid\varphi_s]
&\leq R_i(\varphi_s)
-\frac{\eta_R}{2}\|g_i(\varphi_s)\|^2
+\frac{\eta_R}{2}\|q_s(\varphi_s)\|^2\\
&\quad+\frac{L_R\eta_R^2}{2}\sigma_R^2.
\end{aligned}
\end{equation}
Here the nonpositive coefficient of $\|\mu_s\|^2$ was dropped using $\eta_R\leq1/L_R$.
Applying Eq.~\eqref{eq:router_pl} and unrolling proves Eq.~\eqref{eq:persistent_router_convergence}.
\end{proof}

For the clean matched-weight reset abstraction, set $q_s=0$, initialize every bucket state at the same router $\varphi_{c,0}=\bar\varphi_0$, and take local SGD steps
$\varphi_{c,s+1}=\varphi_{c,s}-\eta_R\widehat g_{i,c}(\varphi_{c,s})$.
This equal-depth construction isolates state fragmentation: $s$ indexes depth along each bucket-local path, so the comparison matches depth rather than total gradient evaluations.
Define the virtual weighted mean, local-model dispersion, and consensus-gradient error
\begin{equation}
\begin{aligned}
\bar\varphi_s&=\sum_c\pi_{i,c}\varphi_{c,s},\\
D_s&=\sum_c\pi_{i,c}\|\varphi_{c,s}-\bar\varphi_s\|^2,\\
e_s&=\sum_c\pi_{i,c}
\bigl(g_{i,c}(\varphi_{c,s})-g_{i,c}(\bar\varphi_s)\bigr).
\end{aligned}
\label{eq:reset_consensus_definitions}
\end{equation}
The mean reset trajectory therefore follows a biased client-objective direction:
\begin{equation}
\mathbb E\!\left[
\sum_c\pi_{i,c}\widehat g_{i,c}(\varphi_{c,s})
\,\middle|\,\{\varphi_{c,s}\}_c
\right]
=g_i(\bar\varphi_s)+e_s.
\label{eq:reset_biased_direction}
\end{equation}

\begin{theorem}[Reset-and-average router convergence with consensus error]
\label{thm:reset-router-appendix}
Under Assumption~\ref{ass:router-optimization} and $0<\eta_R\leq1/L_R$,
\begin{equation}
\begin{aligned}
\mathbb E[R_i(\bar\varphi_S)-R_i^\star]
&\leq
(1-\mu\eta_R)^S(R_i(\bar\varphi_0)-R_i^\star)
+\frac{L_R\eta_R\sigma_R^2}{2\mu}\\
&\quad+
\frac{\eta_R}{2}\sum_{s=0}^{S-1}
(1-\mu\eta_R)^{S-1-s}\mathbb E\|e_s\|^2,
\label{eq:reset_router_convergence}
\end{aligned}
\end{equation}
where
\begin{equation}
\|e_s\|^2\leq L_R^2D_s.
\label{eq:consensus_error_bound}
\end{equation}
Thus reset-and-average can incur an optimization term generated by bucket-local model dispersion; the persistent shared-state bound in Eq.~\eqref{eq:persistent_router_convergence} has no such consensus term.
\end{theorem}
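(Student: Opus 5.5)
Following the proof of Theorem~\ref{thm:persistent-router-appendix}, I will run a one-step descent argument for the \emph{virtual mean} $\bar\varphi_s$ and then unroll it under the PL condition. Averaging the bucket-local updates with the weights $\pi_{i,c}$ gives
\begin{equation*}
\bar\varphi_{s+1}=\bar\varphi_s-\eta_R\sum_c\pi_{i,c}\widehat g_{i,c}(\varphi_{c,s}).
\end{equation*}
By Eq.~\eqref{eq:reset_biased_direction}, this is an SGD step on $R_i$ at $\bar\varphi_s$. Its conditional mean is $\mu_s:=g_i(\bar\varphi_s)+e_s$. By Assumption~\ref{ass:router-optimization}, its aggregated noise has conditional second moment at most $\sigma_R^2$. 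The reset scheme is therefore formally identical to the persistent scheme, with $e_s$ in place of the order bias $q_s$.

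Since each $R_{i,c}$ is $L_R$-smooth, $R_i$ is $L_R$-smooth as a convex combination. The descent lemma gives
\begin{equation*}
\mathbb E[R_i(\bar\varphi_{s+1})\mid\{\varphi_{c,s}\}_c]\leq R_i(\bar\varphi_s)-\eta_R\langle g_i(\bar\varphi_s),\mu_s\rangle+\tfrac{L_R\eta_R^2}{2}\bigl(\|\mu_s\|^2+\sigma_R^2\bigr).
\end{equation*}
Expanding the inner product with the polarization identity
\begin{equation*}
-\langle g,\mu\rangle=-\tfrac12\|g\|^2-\tfrac12\|\mu\|^2+\tfrac12\|e\|^2,
\end{equation*}
the $\|\mu_s\|^2$ term has coefficient $\tfrac{\eta_R}{2}(L_R\eta_R-1)\le 0$ and can be dropped because $\eta_R\le 1/L_R$. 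This leaves
\begin{equation*}
-\tfrac{\eta_R}{2}\|g_i(\bar\varphi_s)\|^2+\tfrac{\eta_R}{2}\|e_s\|^2+\tfrac{L_R\eta_R^2}{2}\sigma_R^2.
\end{equation*}
Applying Eq.~\eqref{eq:router_pl} turns $-\tfrac{\eta_R}{2}\|g_i\|^2$ into $-\mu\eta_R(R_i(\bar\varphi_s)-R_i^\star)$. Taking total expectation then yields the recursion
\begin{equation*}
\Delta_{s+1}\le(1-\mu\eta_R)\Delta_s+\tfrac{\eta_R}{2}\mathbb E\|e_s\|^2+\tfrac{L_R\eta_R^2\sigma_R^2}{2}.
\end{equation*}
Note that $0<\mu\eta_R\le\mu/L_R\le1$, since PL together with smoothness forces $\mu\le L_R$. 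Unrolling over $S$ steps therefore gives the geometric term and the weighted sum of $\mathbb E\|e_s\|^2$. The noise contributions are bounded by the geometric series $\tfrac{L_R\eta_R^2\sigma_R^2}{2}\sum_j(1-\mu\eta_R)^j\le \tfrac{L_R\eta_R\sigma_R^2}{2\mu}$. This is exactly Eq.~\eqref{eq:reset_router_convergence}.

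For Eq.~\eqref{eq:consensus_error_bound}, I will apply Jensen's inequality (convexity of $\|\cdot\|^2$ with weights $\pi_{i,c}$ summing to one):
\begin{equation*}
\|e_s\|^2\le\sum_c\pi_{i,c}\|g_{i,c}(\varphi_{c,s})-g_{i,c}(\bar\varphi_s)\|^2\le L_R^2\sum_c\pi_{i,c}\|\varphi_{c,s}-\bar\varphi_s\|^2=L_R^2D_s,
\end{equation*}
where the middle step uses $L_R$-Lipschitzness of each bucket gradient.

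I expect the main obstacle to be justifying the noise hypothesis for the averaged reset step rather than the algebra. I need the aggregate $\sum_c\pi_{i,c}(\widehat g_{i,c}-g_{i,c})$ to be conditionally centered given all bucket states and to have second moment at most $\sigma_R^2$. The assumption on reset updates is phrased precisely to supply this. I will also state explicitly that the conditioning $\sigma$-algebra contains $\{\varphi_{c,s}\}_c$, so that $e_s$ is measurable and Eq.~\eqref{eq:reset_biased_direction} holds as a conditional mean.
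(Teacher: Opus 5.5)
Your proposal is correct and follows the paper's proof essentially step for step. The paper likewise writes the $\pi_{i,c}$-weighted mean recursion as biased SGD with bias $e_s$ via Eq.~\eqref{eq:reset_biased_direction}, reuses the polarization identity and drops the $\|\mu_s\|^2$ term as in Theorem~\ref{thm:persistent-router-appendix}, applies PL, unrolls with the geometric noise sum, and bounds $\|e_s\|^2\le L_R^2D_s$ by Jensen plus $L_R$-Lipschitz bucket gradients.

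Your explicit check that $0<\mu\eta_R\le 1$, so the contraction factor is nonnegative when unrolling, is a detail the paper leaves implicit. Strictly, PL and smoothness force $\mu\le L_R$ only when $R_i$ is non-constant; in the degenerate constant case the stated bound can actually fail for $\mu>1/\eta_R$, so $\mu\eta_R\le 1$ should be treated as part of the hypotheses there.
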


\begin{proof}
The weighted mean update is
$\bar\varphi_{s+1}=\bar\varphi_s-\eta_R\sum_c\pi_{i,c}\widehat g_{i,c}(\varphi_{c,s})$.
Equation~\eqref{eq:reset_biased_direction} and the same biased-descent identity used in Theorem~\ref{thm:persistent-router-appendix} give
\begin{equation}
\begin{aligned}
\mathbb E[R_i(\bar\varphi_{s+1})-R_i^\star]
&\leq(1-\mu\eta_R)
\mathbb E[R_i(\bar\varphi_s)-R_i^\star]\\
&\quad+\frac{\eta_R}{2}\mathbb E\|e_s\|^2
+\frac{L_R\eta_R^2}{2}\sigma_R^2.
\end{aligned}
\end{equation}
Unrolling proves Eq.~\eqref{eq:reset_router_convergence}.
For Eq.~\eqref{eq:consensus_error_bound}, Jensen's inequality and $L_R$-smoothness give
\begin{equation}
\begin{aligned}
\|e_s\|^2
&\leq\sum_c\pi_{i,c}
\|g_{i,c}(\varphi_{c,s})-g_{i,c}(\bar\varphi_s)\|^2\\
&\leq L_R^2\sum_c\pi_{i,c}
\|\varphi_{c,s}-\bar\varphi_s\|^2
=L_R^2D_s.
\end{aligned}
\end{equation}
\end{proof}

For the randomized reference $q_s=0$, matched initialization, step size, depth, and noise, the certificate gap is the nonnegative consensus sum in Eq.~\eqref{eq:reset_router_convergence}, and is strict when that sum is positive.
For the deterministic persistent schedule, the comparison is between $\mathcal O_{i,S}$ and the reset residual; their magnitudes determine the bound ordering.

The dispersion term is directly tied to heterogeneity across bucket router objectives.
At a common initialization $\varphi$, define
\begin{equation}
\zeta_{i,\mathrm{within}}^2(\varphi)
=\sum_c\pi_{i,c}\|g_{i,c}(\varphi)-g_i(\varphi)\|^2.
\label{eq:within_router_heterogeneity}
\end{equation}

\begin{corollary}[First-step reset dispersion]
\label{cor:first-reset-dispersion}
For deterministic full gradients and one reset step from a common initialization,
\begin{equation}
D_1=\eta_R^2\zeta_{i,\mathrm{within}}^2(\varphi).
\label{eq:first_reset_dispersion}
\end{equation}
Hence heterogeneous bucket gradients create the consensus-error source that enters Eq.~\eqref{eq:reset_router_convergence} from subsequent local steps onward.
\end{corollary}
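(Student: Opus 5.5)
We compute $D_1$ directly from the reset recursion with deterministic full gradients.

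Every bucket state starts at the common router, $\varphi_{c,0}=\bar\varphi_0=\varphi$. With exact gradients in place of $\widehat g_{i,c}$, one reset step gives
\[
\varphi_{c,1}=\varphi-\eta_R g_{i,c}(\varphi).
\]
Averaging with the sample fractions $\pi_{i,c}$, which sum to one, and using the linearity $\sum_c\pi_{i,c}g_{i,c}=g_i$ from Eq.~\eqref{eq:router_client_objective}, we obtain
\[
\bar\varphi_1=\varphi-\eta_R g_i(\varphi).
\]
Subtracting the two expressions yields
\[
\varphi_{c,1}-\bar\varphi_1=-\eta_R\bigl(g_{i,c}(\varphi)-g_i(\varphi)\bigr).
\]

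We then substitute this into the definition of $D_1$ in Eq.~\eqref{eq:reset_consensus_definitions}. Squaring pulls out the factor $\eta_R^2$, and the remaining weighted sum is exactly $\zeta_{i,\mathrm{within}}^2(\varphi)$ from Eq.~\eqref{eq:within_router_heterogeneity}. Hence $D_1=\eta_R^2\zeta_{i,\mathrm{within}}^2(\varphi)$ holds as an equality, not merely a bound.

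For the interpretive consequence, note that $D_0=0$, so $e_0=0$ and the first reset step contributes no consensus term to Eq.~\eqref{eq:reset_router_convergence}. At step $s=1$, however, Eq.~\eqref{eq:consensus_error_bound} gives
\[
\|e_1\|^2\leq L_R^2D_1=L_R^2\eta_R^2\zeta_{i,\mathrm{within}}^2(\varphi).
\]
So bucket-gradient heterogeneity at the common initialization is the source that feeds the consensus sum from the second local step onward.

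The computation itself is routine. The only points needing care are the identity $\sum_c\pi_{i,c}g_{i,c}=g_i$ and the observation that the stated equality depends on deterministic full gradients. With stochastic gradients, $D_1$ would pick up a cross-bucket noise contribution, and the result would hold only in expectation with an added variance term.
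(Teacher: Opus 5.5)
Your proposal is correct and follows the paper's proof: you compute $\varphi_{c,1}=\varphi-\eta_R g_{i,c}(\varphi)$ and $\bar\varphi_1=\varphi-\eta_R g_i(\varphi)$, then substitute into $D_1$. Your added observation that $e_0=0$, so the consensus term first appears at $s=1$ with $\|e_1\|^2\le L_R^2D_1$, correctly spells out the corollary's interpretive sentence.
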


\begin{proof}
After one step,
$\varphi_{c,1}=\varphi-\eta_Rg_{i,c}(\varphi)$ and
$\bar\varphi_1=\varphi-\eta_Rg_i(\varphi)$.
Substitution into $D_1$ proves the result.
\end{proof}

Theorem~\ref{thm:reset-router-appendix} isolates state fragmentation under matched effective pattern weights.
If reset endpoints trained for $H_c$ steps are aggregated with coefficients $a_c$, their first-order normalized direction instead uses
\begin{equation}
\widetilde\pi_c
=\frac{a_cH_c}{\sum_ja_jH_j},
\qquad
b_{\mathrm{wt}}(\varphi)
=\sum_c(\widetilde\pi_c-\pi_{i,c})g_{i,c}(\varphi).
\label{eq:reset_weight_mismatch}
\end{equation}
This bias precedes dispersion; a multi-step error also contains a $\widetilde\pi$-weighted consensus term.
If both $H_c\propto\pi_{i,c}$ and $a_c\propto\pi_{i,c}$, then $\widetilde\pi_c\propto\pi_{i,c}^2$; step-normalized endpoint deltas or any rule enforcing $\widetilde\pi_c=\pi_{i,c}$ restore matched weighting.
More explicitly, if $\|g_{i,c}(\varphi)\|\leq G_R$, then
\begin{equation}
\|b_{\mathrm{wt}}(\varphi)\|
\leq G_R\|\widetilde\pi-\pi_i\|_1.
\label{eq:reset_weight_bias_bound}
\end{equation}
For an equal-depth reset using fixed weights $\widetilde\pi$, define
\begin{equation}
\begin{aligned}
\bar\varphi_s^{\widetilde\pi}
&=\sum_c\widetilde\pi_c\varphi_{c,s},\\
e_s^{\widetilde\pi}
&=\sum_c\widetilde\pi_c
\bigl(g_{i,c}(\varphi_{c,s})
-g_{i,c}(\bar\varphi_s^{\widetilde\pi})\bigr).
\end{aligned}
\end{equation}
Its expected aggregate direction is
$g_i(\bar\varphi_s^{\widetilde\pi})
+b_{\mathrm{wt}}(\bar\varphi_s^{\widetilde\pi})
+e_s^{\widetilde\pi}$.
Thus the biased-descent proof of Theorem~\ref{thm:reset-router-appendix} replaces $e_s$ by
$b_{\mathrm{wt}}+e_s^{\widetilde\pi}$, so weight mismatch enters the same geometrically weighted optimization residual before the gap-calibration transfer.
In the evaluated \emph{w/o client-router} ablation, $H_c$ is allocated approximately by bucket size (with minimum-step and rounding corrections) and endpoint coefficients are again proportional to bucket sample count.
Thus the aggregation-granularity ablation in Table~\ref{tab:ablation_asymmetry} evaluates the implemented reset-and-reweight procedure, whose residual includes both weight mismatch and consensus error.

\subsection{Router Optimization and Sparse-Inference Risk}
\label{app:router_to_sparse}

Let $z(x)\in\{1,\ldots,K\}$ be the latent pattern of $x$, let $m^\dagger(x)=\psi(z(x))$ be its oracle expert, and let $\ell_m(x,y)$ be the loss using only expert $m$.
Define $R_{i,\mathrm{oracle}}=\mathbb E_{(x,y)\sim\mathcal D_i}[\ell_{m^\dagger(x)}(x,y)]$, let $r_m(x;\varphi)$ be the router probability, and define the oracle-probability deficit
\begin{equation}
S_i(\varphi)
=\mathbb E_{x\sim\mathcal D_i}[1-r_{m^\dagger(x)}(x;\varphi)].
\label{eq:oracle_mass_deficit}
\end{equation}

\begin{assumption}[Gap-calibrated task loss]
\label{ass:gap-calibration}
The fixed experts have an identifiable oracle assignment, and the soft-routed task objective is calibrated to that assignment:
\begin{equation}
\begin{gathered}
c_{\mathrm{gap}}S_i(\varphi)
\leq R_i(\varphi)-R_{i,\mathrm{oracle}}+\beta_{\mathrm{mix}},\\
c_{\mathrm{gap}}>0,\qquad\beta_{\mathrm{mix}}\geq0.
\end{gathered}
\label{eq:gap_calibration}
\end{equation}
Define the nonnegative router-class approximation gap
$A_i:=[R_i^\star-R_{i,\mathrm{oracle}}]_+$.
For a linear mixture of fixed expert losses with pointwise non-oracle loss gap at least $\Delta$, Eq.~\eqref{eq:gap_calibration} holds with $c_{\mathrm{gap}}=\Delta$ and $\beta_{\mathrm{mix}}=0$.
\end{assumption}

Let $s_m(x;\varphi)$ denote router logits and
\begin{equation}
\operatorname{mar}_{\varphi}(x)
=s_{m^\dagger(x)}(x;\varphi)-\max_{j\neq m^\dagger(x)}s_j(x;\varphi).
\end{equation}
Define $R_{\tau,i}(\varphi)=\Pr_{x\sim\mathcal D_i}[\operatorname{mar}_{\varphi}(x)\leq\tau]$.

\begin{proposition}[Gap-calibrated margin risk]
\label{prop:gap-margin-risk}
Under Assumption~\ref{ass:gap-calibration}, for every $\tau\in\mathbb R$,
\begin{equation}
R_{\tau,i}(\varphi)
\leq
\frac{1+e^\tau}{c_{\mathrm{gap}}}
\bigl(R_i(\varphi)-R_i^\star+A_i+\beta_{\mathrm{mix}}\bigr).
\label{eq:gap_margin_risk}
\end{equation}
\end{proposition}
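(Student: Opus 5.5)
The plan is a pointwise Markov-type step followed by the calibration assumption. I will first show that a small margin forces a substantial oracle-probability deficit. Fix $x$ and write $m^\dagger=m^\dagger(x)$, $s^\dagger=s_{m^\dagger}(x;\varphi)$ and $j^\star=\arg\max_{j\neq m^\dagger}s_j(x;\varphi)$. Since $r$ is the softmax of the logits $s$, and the denominator includes the two terms $e^{s^\dagger}$ and $e^{s_{j^\star}}$ plus further nonnegative terms,
\begin{equation*}
r_{m^\dagger}(x;\varphi)=\frac{e^{s^\dagger}}{\sum_j e^{s_j}}\leq\frac{e^{s^\dagger}}{e^{s^\dagger}+e^{s_{j^\star}}}=\frac{1}{1+e^{-\operatorname{mar}_\varphi(x)}}.
\end{equation*}
Hence on the event $\{\operatorname{mar}_\varphi(x)\leq\tau\}$, monotonicity of the logistic function gives
\begin{equation*}
1-r_{m^\dagger}(x;\varphi)\geq\frac{e^{-\tau}}{1+e^{-\tau}}=\frac{1}{1+e^{\tau}}.
\end{equation*}

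Next I integrate this bound. Since $1-r_{m^\dagger}\geq0$ everywhere, I drop the complement event and get
\begin{equation*}
S_i(\varphi)\geq\mathbb E\!\left[(1-r_{m^\dagger(x)})\mathbf 1\{\operatorname{mar}_\varphi(x)\leq\tau\}\right]\geq\frac{R_{\tau,i}(\varphi)}{1+e^\tau}.
\end{equation*}
This yields $R_{\tau,i}(\varphi)\leq(1+e^\tau)S_i(\varphi)$.

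Finally I apply Assumption~\ref{ass:gap-calibration}:
\begin{equation*}
S_i(\varphi)\leq\frac{1}{c_{\mathrm{gap}}}\bigl(R_i(\varphi)-R_{i,\mathrm{oracle}}+\beta_{\mathrm{mix}}\bigr).
\end{equation*}
I then split $R_i(\varphi)-R_{i,\mathrm{oracle}}=(R_i(\varphi)-R_i^\star)+(R_i^\star-R_{i,\mathrm{oracle}})$ and bound the second term by its positive part $A_i$. Chaining the two inequalities gives Eq.~\eqref{eq:gap_margin_risk}.

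The only place needing care is the softmax-to-margin step, in two respects. First, ties and the case of two experts make the inequality an equality, so the direction must be stated correctly. Second, the bound must hold for all real $\tau$, including negative values, and it does, since the logistic bound is monotone in $\tau$ throughout. Everything else is a direct substitution.
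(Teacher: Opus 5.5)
Your proposal is correct and follows the paper's proof essentially step for step. You use the softmax bound giving $1-r_{m^\dagger(x)}(x;\varphi)\geq(1+e^\tau)^{-1}$ on $\{\operatorname{mar}_\varphi(x)\leq\tau\}$, integrate the resulting pointwise bound $\mathbf 1\{\operatorname{mar}_\varphi(x)\leq\tau\}\leq(1+e^\tau)(1-r_{m^\dagger(x)}(x;\varphi))$ to get $R_{\tau,i}(\varphi)\leq(1+e^\tau)S_i(\varphi)$, and finish with the calibration assumption and $R_i^\star-R_{i,\mathrm{oracle}}\leq A_i$, exactly as the paper does.
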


\begin{proof}
If $\operatorname{mar}_{\varphi}(x)\leq\tau$, then one competing logit is at least
$s_{m^\dagger(x)}-\tau$, so
$r_{m^\dagger(x)}(x;\varphi)\leq(1+e^{-\tau})^{-1}$ and
$1-r_{m^\dagger(x)}(x;\varphi)\geq(1+e^\tau)^{-1}$.
Therefore
$\mathbf1\{\operatorname{mar}_{\varphi}(x)\leq\tau\}
\leq(1+e^\tau)(1-r_{m^\dagger(x)}(x;\varphi))$.
Taking expectation, applying Eq.~\eqref{eq:gap_calibration}, and using
$R_i(\varphi)-R_{i,\mathrm{oracle}}
\leq R_i(\varphi)-R_i^\star+A_i$
proves the result.
\end{proof}

Let $\mathcal E_{i,S}$ and $\mathcal O_{i,S}$ be defined in Eq.~\eqref{eq:persistent_router_convergence}, and define
$\mathcal C_{i,S}=\frac{\eta_R}{2}\sum_{s=0}^{S-1}(1-\mu\eta_R)^{S-1-s}\mathbb E\|e_s\|^2$.

\begin{corollary}[Persistent and reset oracle-routing risk]
\label{cor:persistent-reset-risk}
Under Assumptions~\ref{ass:router-optimization}--\ref{ass:gap-calibration}, a common initialization, and the matched-weight reset construction of Theorem~\ref{thm:reset-router-appendix}, for every $\tau\in\mathbb R$,
\begin{equation}
\begin{aligned}
\mathbb E R_{\tau,i}(\varphi_S)
&\leq\frac{1+e^\tau}{c_{\mathrm{gap}}}
(\mathcal E_{i,S}+\mathcal O_{i,S}+A_i+\beta_{\mathrm{mix}}),\\
\mathbb E R_{\tau,i}(\bar\varphi_S)
&\leq\frac{1+e^\tau}{c_{\mathrm{gap}}}
(\mathcal E_{i,S}+\mathcal C_{i,S}+A_i+\beta_{\mathrm{mix}}).
\label{eq:persistent_reset_margin}
\end{aligned}
\end{equation}
\end{corollary}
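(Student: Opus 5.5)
The plan is to combine Proposition~\ref{prop:gap-margin-risk} pointwise in $\varphi$ with the two router convergence theorems, passing through expectations. The key observation is that the right-hand side of Eq.~\eqref{eq:gap_margin_risk} is affine in $R_i(\varphi)$ with the nonnegative coefficient $(1+e^\tau)/c_{\mathrm{gap}}$. Every other quantity in it ($R_i^\star$, $A_i$, $\beta_{\mathrm{mix}}$, $c_{\mathrm{gap}}$) is deterministic, since the experts are fixed and the calibration holds for every router parameter.

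For the persistent trajectory, I instantiate Proposition~\ref{prop:gap-margin-risk} at the random iterate $\varphi=\varphi_S$, which holds almost surely. I then take total expectation of both sides. By linearity and monotonicity of expectation,
\[
\mathbb E R_{\tau,i}(\varphi_S)\le\frac{1+e^\tau}{c_{\mathrm{gap}}}\bigl(\mathbb E[R_i(\varphi_S)-R_i^\star]+A_i+\beta_{\mathrm{mix}}\bigr).
\]
Theorem~\ref{thm:persistent-router-appendix} bounds $\mathbb E[R_i(\varphi_S)-R_i^\star]$ by $\mathcal E_{i,S}+\mathcal O_{i,S}$, and substituting gives the first line.

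For the reset trajectory, I apply the same argument at $\varphi=\bar\varphi_S$, the $\pi_{i,c}$-weighted average of the bucket-local states. This is a valid router parameter, so Assumption~\ref{ass:gap-calibration} applies to it. I then invoke Theorem~\ref{thm:reset-router-appendix}, whose bound equals $\mathcal E_{i,S}+\mathcal C_{i,S}$. The common initialization $\bar\varphi_0=\varphi_0$ makes the contraction term identical to the one in $\mathcal E_{i,S}$, and the geometric consensus sum is exactly $\mathcal C_{i,S}$ as defined.

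The only delicate point is measurability and integrability. Expectations of $R_{\tau,i}(\varphi_S)$ must be taken over the training randomness, and the pointwise inequality must hold for every realization, which it does because Proposition~\ref{prop:gap-margin-risk} is deterministic in $\varphi$. The bound is also nontrivial only because the bracket is nonnegative: $R_i(\varphi)\ge R_i^\star$ and $A_i,\beta_{\mathrm{mix}}\ge0$. I expect no real obstacle beyond checking that the matched-weight reset construction is exactly the setting of Theorem~\ref{thm:reset-router-appendix}.
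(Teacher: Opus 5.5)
Your proposal is correct and follows the paper's argument. The paper's proof is just the one-line remark that the corollary follows from Proposition~\ref{prop:gap-margin-risk} together with Theorems~\ref{thm:persistent-router-appendix}--\ref{thm:reset-router-appendix}, and you fill in the steps it leaves implicit: the proposition holds for every realization of the endpoint, expectation is taken through the nonnegative factor $(1+e^\tau)/c_{\mathrm{gap}}$, and the common initialization identifies $R_i(\bar\varphi_0)$ with $R_i(\varphi_0)$, so the reset contraction term matches $\mathcal E_{i,S}$.
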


This follows from Proposition~\ref{prop:gap-margin-risk} and Theorems~\ref{thm:persistent-router-appendix}--\ref{thm:reset-router-appendix}.
Hence persistent shared-state SGD has low oracle-routing surrogate risk when $\mathcal E_{i,S}$, $\mathcal O_{i,S}$, $A_i$, and $\beta_{\mathrm{mix}}$ are small.
In the randomized reference $\mathcal O_{i,S}=0$, whereas matched reset additionally propagates $\mathcal C_{i,S}$.
The weight-mismatched extension above replaces this clean consensus residual by one containing
$b_{\mathrm{wt}}+e_s^{\widetilde\pi}$.

Finally, let $\widehat m(x)=\arg\max_ms_m(x;\varphi)$ and $R_{\mathrm{sparse},i}=\mathbb E_{(x,y)\sim\mathcal D_i}[\ell_{\widehat m(x)}(x,y)]$, and assume
\begin{equation}
[\ell_{\widehat m(x)}(x,y)-\ell_{m^\dagger(x)}(x,y)]_+
\leq\Delta_{\max}.
\label{eq:bounded_sparse_excess}
\end{equation}
Since the mismatch event is contained in $\{\operatorname{mar}_{\varphi}(x)\leq0\}$,
\begin{corollary}[Optimization-controlled sparse-routing risk]
\label{cor:optimization-sparse-risk}
\begin{equation}
\begin{aligned}
R_{\mathrm{sparse},i}-R_{i,\mathrm{oracle}}
&\leq\Delta_{\max}R_{0,i}(\varphi)\\
&\leq\frac{2\Delta_{\max}}{c_{\mathrm{gap}}}
\bigl(R_i(\varphi)-R_i^\star+A_i+\beta_{\mathrm{mix}}\bigr).
\label{eq:optimization_sparse_risk}
\end{aligned}
\end{equation}
\end{corollary}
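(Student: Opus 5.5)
The argument has two links. The first inequality is a pointwise bound on the Top-1 excess loss. The second inequality is Proposition~\ref{prop:gap-margin-risk} evaluated at $\tau=0$.

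\textbf{First inequality.} Fix $(x,y)$ and split on whether the argmax matches the oracle.
\begin{itemize}
\item If $\widehat m(x)=m^\dagger(x)$, the excess $\ell_{\widehat m(x)}(x,y)-\ell_{m^\dagger(x)}(x,y)$ is zero.
\item If $\widehat m(x)\neq m^\dagger(x)$, the excess is at most its positive part. By Eq.~\eqref{eq:bounded_sparse_excess}, that positive part is at most $\Delta_{\max}$.
\end{itemize}
So the excess is bounded pointwise by $\Delta_{\max}\mathbf 1\{\widehat m(x)\neq m^\dagger(x)\}$.

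Next, the mismatch event is contained in $\{\operatorname{mar}_\varphi(x)\leq 0\}$. If some $j\neq m^\dagger(x)$ attains the argmax of the logits, then $s_j\geq s_{m^\dagger(x)}$, so the margin is nonpositive. This inclusion holds under any tie-breaking rule, because a tie already gives margin $0$.

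Taking expectations over $\mathcal D_i$ then gives
\[
R_{\mathrm{sparse},i}-R_{i,\mathrm{oracle}}\leq\Delta_{\max}\Pr[\operatorname{mar}_\varphi(x)\leq0]=\Delta_{\max}R_{0,i}(\varphi).
\]

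\textbf{Second inequality.} Apply Proposition~\ref{prop:gap-margin-risk} with $\tau=0$. The prefactor becomes $(1+e^0)/c_{\mathrm{gap}}=2/c_{\mathrm{gap}}$. Multiplying by $\Delta_{\max}\geq0$ gives the stated bound $\frac{2\Delta_{\max}}{c_{\mathrm{gap}}}(R_i(\varphi)-R_i^\star+A_i+\beta_{\mathrm{mix}})$.

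It is worth checking that the $\tau=0$ case of the proposition is sound. When $\operatorname{mar}\leq0$, some competitor logit is at least the oracle logit, so the oracle softmax probability is at most $1/2$. Hence $\mathbf 1\{\operatorname{mar}\leq0\}\leq 2(1-r_{m^\dagger(x)})$. Averaging gives $R_{0,i}\leq 2S_i(\varphi)$. Then gap calibration (Eq.~\eqref{eq:gap_calibration}), together with $R_i(\varphi)-R_{i,\mathrm{oracle}}\leq R_i(\varphi)-R_i^\star+A_i$ from the definition of $A_i$, finishes the bound.

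\textbf{Main obstacle.} The only delicate step is the first. We must justify that the expected excess is bounded by the expected positive part of the excess restricted to the mismatch event. This uses the pointwise reduction above: on matches the excess is exactly zero. We never need the non-oracle loss to be larger than the oracle loss; a negative excess is simply dropped.

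We should also note that $\ell_{\widehat m(x)}$ and $\ell_{m^\dagger(x)}$ are evaluated at the same $(x,y)$. The clean one-pattern-per-expert regime guarantees that $m^\dagger$ is well defined.

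Finally, substituting Theorems~\ref{thm:persistent-router-appendix}--\ref{thm:reset-router-appendix} in expectation recovers the persistent and reset versions.
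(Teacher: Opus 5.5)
Your proposal is correct and matches the paper's argument. You bound the Top-1 excess loss pointwise by $\Delta_{\max}\mathbf 1\{\widehat m(x)\neq m^\dagger(x)\}$, use the containment of the mismatch event in $\{\operatorname{mar}_\varphi(x)\leq 0\}$, and apply Proposition~\ref{prop:gap-margin-risk} at $\tau=0$ to get the factor $2/c_{\mathrm{gap}}$; your tie-breaking and $\tau=0$ checks simply spell out details the paper leaves implicit.
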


For bounded metrics one may take a global $\Delta_{\max}$; for unbounded token-level cross-entropy, Eq.~\eqref{eq:bounded_sparse_excess} applies to clipped loss or requires a finite conditional excess-loss moment.
The routing heatmap and same-checkpoint sparse-inference ablation empirically probe the reliable-routing regime captured by the bound.

\end{document}